\pgfplotsset{compat=1.5}
\newtheorem{theorem}{Theorem}
\newtheorem{definition}[theorem]{Definition}
\def\Dcal{\mathcal{D}}
\def\Fbb{\mathbb{F}}
\def\Hbb{\mathbb{H}}
\def\Pcal{\mathcal{P}}
\def\Qcal{\mathcal{Q}}
\def\Ucal{\mathcal{U}}
\def\Vcal{\mathcal{V}}
\def\Xcal{\mathbb{X}}
\def\Ycal{\mathbb{Y}}
\def\Dcalx{\mathcal{D}_{\Xcal}}
\def\Dcaly{\mathcal{D}_{\Ycal}}
\def\Dcalyx{\mathcal{D}_{\Ycal|\x}}
\def\Dcalxy{\mathcal{D}_{\Xcal|y}}
\def\Scal{{\mathcal{S}}}
\def\Scalx{{\Scal_{\Xcal}}}
\def\Scaly{{\Scal_{\Ycal}}}
\def\Scalxy{{\Scal_{\Xcal|y}}}
\def\Tcal{{\mathcal{T}}}
\def\Tcalxy{{\Tcal_{\Xcal|y}}}
\def\Tcalx{{\Tcal_{\Xcal}}}
\def\Tcaly{{\Tcal_{\Ycal}}}
\def\Rset{{\mathbb R}}
\def\Nset{{\mathbb N}}
\def\ones{\mathbf 1}
\def\I{{\bf I}}
\def\P{{\bf P}}
\def\P{{\bf P}}
\def\Ybf{{\bf Y}}
\def\p{{\bf p}}
\def\v{{\bf v}}
\def\u{{\bf u}}
\def\x{{\bf x}}
\def\y{{\bf y}}
\def\rbf{{\bf r}}
\newcommand{\mubf}{\boldsymbol{\mu}}
\def\alphabf{{\boldsymbol{\alpha}}}
\def\betabf{{\boldsymbol{\beta}}}
\def\epsbf{{\boldsymbol{\epsilon}}}
\def\etabf{{\boldsymbol{\eta}}}
\def \ie{\emph{i.e. }} 
\def \eg{\emph{e.g. }}
\newcommand{\esp}[2]{\mathbb{E}_{#1}\left[#2\right]}
\newcommand{\proba}[2]{\mathbb{P}_{#1}\left[#2\right]}
\def\argmax{\operatorname{argmax}}
\def\argmin{\operatorname{argmin}}
\newcommand{\sgn}[1]{\operatorname{sgn}\left(#1\right)}
\newcommand{\supp}[1]{\operatorname{supp} #1}
\newcommand{\risk}[3][]{  \ifthenelse{\isempty{#1}}
                        {\mathfrak{L}_{#2}\left(#3\right)}
                        {\mathfrak{L}^{#1}_{#2}\left(#3\right)}}
\newcommand{\urisk}[3][]{\ifthenelse{\isempty{#1}}
                            {\mathfrak{C}_{#2}\left(#3\right)}
                            {\mathfrak{C}^{#1}_{#2}\left(#3\right)}
                            }
\def\l01{l_{01}}
\def\HdH{{\Hbb\Delta\Hbb}}
\def \IPM{\operatorname{IPM}}
\def \IMD{\operatorname{IMD}}
\title{Connecting sufficient conditions for domain adaptation: source-guided uncertainty, relaxed divergences and discrepancy localization}
\author{Sofien Dhouib\\
	Department of Computer Science\\
	University of Tübingen\\
	Germany \\
	\texttt{sofiane.dhouib@inf.uni-tuebingen.de} \\
	\And Setareh Maghsudi\\
	Department of Computer Science\\
	University of Tübingen\\
	Germany\\
	\texttt{setareh.maghsudi@uni-tuebingen.de}}
\date{}
\begin{document}
\maketitle
\begin{abstract}
Recent advances in domain adaptation establish that requiring a low risk on the source domain and equal feature marginals degrade the adaptation's performance. At the same time, empirical evidence shows that incorporating an unsupervised target domain term that pushes decision boundaries away from the high-density regions, along with relaxed alignment, improves adaptation. In this paper, we theoretically justify such observations via a new bound on the target risk, and we connect two notions of relaxation for divergence, namely $\beta-$relaxed divergences and localization. This connection allows us to incorporate the source domain's categorical structure into the relaxation of the considered divergence, provably resulting in a better handling of the label shift case in particular.
\end{abstract}
\section{Introduction}
Supervised learning algorithms are prone to failure when the training and testing distributions are different. That arises in several real world applications such as speech recognition and computer vision, due to changes in the data collection process for example. As solving this problem by collecting more data might be problematic due to the potential cost of the labeling process, the \emph{Domain Adaptation} (DA) field \citep{pan_survey_2010,weiss_survey_2016} has emerged to tackle the issue in attempt to transfer the knowledge acquired on the labeled training set, stemming from a source distribution, to a partially or totally unlabeled testing set, corresponding to a target distribution.
\par Over the last decade, DA has been the focus of several lines of work. On the theoretical level, in the context of tackling the distribution shift problem \citep{quinonero2008dataset}, one salient idea is to bound the risk on the target distribution by quantities reflecting the performance on the source domain along with its relatedness to the target \citep{ben-david_analysis_2007,mansour_domain_2009,cortes_domain_2011,ben-david_domain_2014,cortes_domain_2014,germain_pac-bayesian_2013,germain_new_2016,zhang_bridging_2019}. We refer the interested reader to \citet{redko_survey_2020} for a more exhaustive account in this regard. On the algorithmic level, early approaches aim at aligning distributions on the feature level \citep{blitzer_biographies_2007,daume_iii_frustratingly_2009,fernando_subspace_2014} or at the instance level via reweighting \citep{shimodaira_covariate_2000,sugiyama_covariate_2007,huang2007correcting,cortes_learning_2010}. The goal behind such an alignment is to reduce some dissimilarity measure between the domains, such as the Wasserstein distance \citep{courty_optimal_2016,courty_joint_2017}, the Maximum Mean Discrepancy \citep{huang2007correcting,gong_connecting_2013}, or the distance between covariance matrices \citep{sun_return_2016}, to name a few. The recent reviews of \citet{kouw_review_2019,zhang_transfer_2019,zhuang2020comprehensive} provide an excellent overview of the different methods. More recently, the emergence of deep learning \citep{goodfellow2016deep} resulted in a family of methods looking for a feature representation that not only is discriminative for classes on the source domain, but that is also domain-agnostic \citep{ganin_domain-adversarial_2016,long2018conditional,shu_dirt-t_2018}. Such approaches have proven their effectiveness especially for computer vision \citep{csurka2017domain}. We refer the interested reader to \citet{wang_deep_2018,wilson2020survey} for reviews on deep DA.
\par Nevertheless, domain alignment imposes the cross-labeling risk, resulting in a correspondence between instances having different labels \citep{mehra2021understanding}. That happens, for example, when the label marginals vary between the two domains. Provably, this situation deteriorates the adaptation performance when combined with a good performance on the source domain \citep{zhao_learning_2019,wu2019domain,le2021lamda}. As a result, two directions to tackle this problem have been recently studied. On the one hand, some lines of work relax the requirement of equality of distribution when trying to align them \citep{johansson_support_2019,wu2019domain,zhang2020localized,tong2022adversarial}, whereas other approaches jointly align the label and feature marginals in an attempt to circumvent the label shift problem \citep{redko2019optimal,tachet2020domain} and its generalizations \citep{rakotomamonjy2021optimal,kirchmeyer2022mapping} allowing an additional shift in label-conditionals. These approaches, however, are disconnected and more understanding on their relations is needed. Moreover, they only handle the problem of strictness of the divergence, whereas the recent negative results we mentioned also point out to the role of requiring a good performance on the source domain. In this regard, recent deep learning approaches are increasingly using an unsupervised loss on the target domain to promote a more class-discriminative structure in addition to requiring a good performance on the source  \citep{shu_dirt-t_2018,saito_semi-supervised_2019,tan2020class,kirchmeyer2022mapping,tong2022adversarial}. To the best of our knowledge, only a few papers, including \citet{germain_new_2016} and \citet{morerio_correlation_2017}, deliver theoretical evidence of the benefits of such terms.

Against this background, in this paper we provide the following contributions:
\begin{itemize}
    \item We theoretically prove the utility of the minimizing the uncertainty of the considered scoring function in its predictions on the target domain while being guided by the predictions on the source domain. Our result tightens a broad class of previously established DA bounds in the sense that it only requires these bounds to involve a risk on the source domain.
    \item We introduce a new discrepancy between measures that generalizes Integral Probability Metrics \citep{zolotarev1984} when the compared measures do not have the same mass, and use it to connect two notions of relaxed dissimilarity between domains, namely \emph{localized discrepancies} \citep{zhang2020localized} and \emph{$\beta-$admissible distances} \citep{wu2019domain}. We further harness our established link in order to incorporate the source domain's observable categorical structure into the relaxation, leading to an additional connection to class re-weighting methods.
    \item Depending on the choice of the functional space defining our discrepancy measure, we revisit previously established results. In particular we theoretically justify approaches that rely on the extremely relaxed requirement of confusion of supports rather than of distributions.
    \item We illustrate the benefits of taking the categorical structure of the source into account when relaxing the discrepancy between distributions, via experiments on toy datasets in the particular case of Wasserstein distances.
\end{itemize}

\paragraph{Outline of the paper} After introducing the problem setup and the notations in \Cref{sec:problem_setup}, we prove in \Cref{sec:source_guided_uncertainty} the theoretical interest of enforcing a scoring function at hand to be confident in its predictions. Then, we specialize our study to bounds involving a divergence term and a joint risk in \Cref{sec:weak_alignment_localization}, where we link previously introduced notions of relaxation. and we extend them by incorporating the source domain's categorical structure. \Cref{sec:special_cases} is dedicated to revisiting previously established DA results in the light of our theoretical results. Finally, we illustrate the interest of incorporating the categorical structure over classic relaxation in \Cref{sec:experiments}.
\section{Problem setup and notations}
\label{sec:problem_setup}
We consider a multi-class domain adaptation setting, where the feature space is $\Xcal$, a compact subset of $\Rset^p$ ($p \in \Nset^*$) and the label space is $\Ycal = \{y_1, \cdots, y_K\}$, where the different classes are encoded as the basis vectors of $\Rset^K$ (one hot encoding), unless specified otherwise. The source and target domains correspond to two joint distributions $\Scal$ and $\Tcal$ over $\Xcal \times \Ycal$. For any probability distribution $\Dcal$ over $\Xcal\times\Ycal$, we denote by $\Dcalx,\Dcaly,\Dcalxy,\Dcalyx$ its feature- and label- marginals, and label- and feature- conditionals respectively, for $\x\in\Xcal, y\in\Ycal$\footnote{Vectors are denoted in bold lower case font.}. In particular, for the class conditional distributions, by abuse of notation we will denote $\Dcal_{\Xcal|k}$ instead of $\Dcal_{\Xcal|y_k}$. When we refer to labeling functions $f_\Scal$ (resp. $f_\Tcal$) of the source (resp. target) domain, we mean a function that outputs a vector over the $K-$dimensional probability simplex, which can be degenerate to indicate only one class in the case of deterministic labeling. All of the measures we consider over $\Xcal$ have densities, \ie, they are absolutely continuous with respect to the Lebesgues measure.
\par Concerning the performance of classification, we consider classifiers as functions in $\Ycal^\Xcal$ that are typically selected from a hypothesis space $\Hbb\subsetneq\Ycal^\Xcal$, and scoring functions as functions in $(\Rset^K)^\Xcal$. A loss function is any function $l: \Rset^K\times \Rset^K \to \Rset_+$, typically taking a scoring function or a classifier for the first argument, and a classifier for the second, and verifying $l(y,y) = 0\ \forall y \in \Ycal$. Given a scoring function $g$, we denote its associated classifier as $h_g$, \ie $h_g(\x) = y_k$ with $k \in \argmax g(\x)$, and assume that for all the loss functions we consider, we have $h_g(\x) \in \argmin_{y \in \Ycal} l(g(\x),y)$ for any $\x$ in $\Xcal$. For the sake of conciseness, we consider that any classifier is also a scoring function, with $h_g = g$. Finally, we define the $l-$risk of a scoring function $g$ over a distribution $\Dcal$ over $\Xcal \times \Ycal$ as $\risk{\Dcal}{g} \coloneqq \esp{(\x,y)\sim\Dcal}{l(g(\x),y)}$, and we extend it to the disagreement of $g$ with a classifier $h$ as $\risk{\Dcal}{g,h} \coloneqq \esp{\x\sim\Dcalx}{l(g(\x),h(\x))}$.

\section{Role of the confidence of a scoring functions in its predictions on the target domain}
\label{sec:source_guided_uncertainty}
Given a scoring function $g: \Xcal \to \Rset^K$, one approach to measure its uncertainty in its predictions $h_g$, using a loss function $l$, is to compute the following quantity.
\begin{definition}[Uncertainty of a scoring function]\label{def:uncertainty}
    Given a scoring function $g$ with $h_g \in \Hbb$, we define its uncertainty in its predictions over a distribution $\Dcal$ over $\Xcal \times \Ycal$ by
    \begin{equation}
    \inf_{h \in \Hbb} \risk{\Dcal}{g,h} = \risk{\Dcal}{g,h_g}.
\end{equation}
\end{definition}
Below, we give two known examples of this quantity for different choices of the loss function $l$.
\begin{restatable}[Cross entropy loss]{example}{explCrossEntropyLoss}
\label{expl:crossEntropyLoss}
In this case, $g(\x)$ is in the $K-$dimensional probability simplex (after applying the softmax function). We then have $l(g(\x), h_g(\x)) = H_\infty(g(\x))$,\footnote{The proofs of all theoretical claims can be found in the appendix.}
    where $H_\alpha$ denotes the Renyi entropy \citep{renyi1961measures} defined as $H_\alpha(g(\x)) = \frac{1}{1-\alpha}\log \left(\sum_i (g(\x))_i^\alpha\right)$, which is equal to the Shannon entropy when $\alpha = 1$. Since we have $H_\infty(g(\x)) \leq H_\alpha(\x)\quad \forall \alpha \geq 0$, having a small conditional entropy of $g(\x)$ in particular results in a small uncertainty of $g(\x)$. Minimizing the conditional entropy, borrowed from the semi-supervised learning literature \citep{grandvalet2005semi,erkan2010semi}, is now a standard approach in domain adaptation \citep{shu_dirt-t_2018,liang2021source,kirchmeyer2022mapping}.
\end{restatable}
\begin{restatable}[Hinge loss]{example}{explHingeLoss}\label{expl:hingeLoss}
The binary SVM \citep{boser_training_1992,cortes_support-vector_1995} problem can be written as minimizing the regularized risk of the hinge loss function $l(g(\x),y) = (1 - yg(\x))_+$, where $g$ is a linear classifier and $y \in \{-1,1\}$. Computing the uncertainty of $g$ at $\x$ yields  $(1-\abs{x})_+$, the latter being an unsupervised loss appearing in the transductive SVM formulation \citep{chapelle2005semi,collobert2006large}. A smooth analogue of this quantity also appears in \citet{germain_new_2016}.
\end{restatable}
    

\par In the absence of labeled target data, the best one can hope for by obtaining a confident classifier is a clustering in which the scoring function identifies the classes up to a permutation. In what follows, we combine the previous quantity with supervision from the source domain to obtain a source-guided measure of uncertainty.
\begin{definition}[Source-guided uncertainty]
    Let $\Hbb$ be a hypothesis space, and let $l^1$ and $l^2$ be two loss functions with associated risks $\risk[1]{\Dcal}{.}$ and $\risk[2]{\Dcal}{.}$ for a distribution $\Dcal$. The source-conditioned confidence of a function $g \in \Ycal^\Xcal$ associated to the two previous losses is:
    \begin{equation}
        \urisk[1,2]{\Hbb}{g} = \inf_{h\in \Hbb}\risk[1]{\Tcal}{g,h} + \risk[2]{\Scal}{h}.
    \end{equation}
    In the case where $l^1 =l^2 = l$, we simply denote $\urisk{\Hbb}{h}$, where $l$ is clear from the context.
\end{definition}
Compared with \Cref{def:uncertainty}, we have an additional term that can be thought of as a regularization forcing $h$ to be compatible with the labels on the source domain. A second interpretation of the source-guided uncertainty is that when $l^1 = l^2 = l$, it is equal to the ideal joint $l-$risk corresponding to the class of functions $\Hbb$, for a target domain labeled by $g$. In particular, for $g = f_\Tcal$, it is equal to the ideal joint risk \citep{ben-david_theory_2010,acuna2021f,zhong2021does}. 
\par The next proposition formalizes some properties of the source-guided uncertainty and helps to understand when it is small.
\begin{restatable}[Properties of the source-guided uncertainty]{proposition}{propSourceGuidedUncertainty}\label{prop:sourceGuidedUncertainty}
    The source-guided uncertainty of a scoring function verifies the following properties:
    \begin{enumerate}
        \item If $g$ is a scoring function and $h_g \in \Hbb$, then
            $\urisk[1,2]{\Hbb}{g} \leq \risk[1]{\Tcal}{g,h_g} + \risk[2]{\Scal}{h_g}$. In particular, if $g \in \Hbb$, then $\urisk[1,2]{\Hbb}{g} \leq \risk[2]{\Scal}{g}$.
        \item If $\Hbb \subseteq \tilde\Hbb \subseteq \Ycal^\Xcal$, and if $l^1(a,a) = 0$, then
            $\inf_{g \in \tilde \Hbb}\urisk[1,2]{\Hbb}{g} = \inf_{h \in \Hbb}\risk[2]{\Scal}{h}$.
        \item If $l = l^1 = l^2$ obeys the triangle inequality, then $\urisk{\Hbb}{h} = \risk{\Scal}{h}$ for $h \in \{h_\Scal, h^*\}$, where $h_\Scal \in \argmin \risk{\Scal}{h}$ and $h^*\in\argmin_{h \in \Hbb}\risk{\Tcal}{h} + \risk{\Scal}{h}$.
    \end{enumerate}
\end{restatable}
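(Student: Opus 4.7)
The plan is to dispatch each of the three items by specializing the infimum in the definition of $\urisk[1,2]{\Hbb}{\cdot}$ to a carefully chosen $h \in \Hbb$ (for the upper bounds) and by exploiting non-negativity of the losses plus the assumed structure (triangle inequality, optimality) for the matching lower bounds.

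For item 1, since $h_g \in \Hbb$ is a legal choice for the infimum, evaluating the objective at $h = h_g$ immediately yields
$\urisk[1,2]{\Hbb}{g} \le \risk[1]{\Tcal}{g,h_g} + \risk[2]{\Scal}{h_g}$.
For the parenthetical remark, when $g \in \Hbb$ the convention $h_g = g$ applies, so $g(\x) \in \Ycal$ and the assumption $l(y,y)=0$ for $y \in \Ycal$ gives $\risk[1]{\Tcal}{g,h_g}=0$. This is essentially a one-line argument.

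For item 2, I would prove the two inequalities separately. For the $\le$ direction, pick any minimizer $h^\star \in \argmin_{h \in \Hbb}\risk[2]{\Scal}{h}$; since $\Hbb \subseteq \tilde \Hbb$ it is admissible for the outer infimum, and item 1 combined with $l^1(a,a)=0$ gives $\urisk[1,2]{\Hbb}{h^\star} \le \risk[2]{\Scal}{h^\star}$. For the $\ge$ direction, use that for any $g \in \tilde \Hbb$ and any $h \in \Hbb$ we have $\risk[1]{\Tcal}{g,h} \ge 0$, hence $\urisk[1,2]{\Hbb}{g} \ge \inf_{h\in\Hbb}\risk[2]{\Scal}{h}$; taking the infimum over $g \in \tilde \Hbb$ closes the argument.

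For item 3, the upper bound $\urisk{\Hbb}{h} \le \risk{\Scal}{h}$ again follows from item 1 (plugging $h' = h$ in the infimum). The interesting part is the lower bound. For $h_\Scal$, no triangle inequality is needed: for every $h' \in \Hbb$, the optimality of $h_\Scal$ on the source gives $\risk{\Scal}{h'} \ge \risk{\Scal}{h_\Scal}$, and $\risk{\Tcal}{h_\Scal,h'} \ge 0$, so each term in the infimum defining $\urisk{\Hbb}{h_\Scal}$ is at least $\risk{\Scal}{h_\Scal}$. The main obstacle is the case $h = h^\star$, which is where the triangle inequality enters: applying it pointwise and integrating against $\Tcal$ gives $\risk{\Tcal}{h^\star,h'} \ge \risk{\Tcal}{h^\star} - \risk{\Tcal}{h'}$, whence
\[
\urisk{\Hbb}{h^\star} \ge \inf_{h' \in \Hbb}\bigl[\risk{\Tcal}{h^\star} - \risk{\Tcal}{h'} + \risk{\Scal}{h'}\bigr].
\]
Now the optimality of $h^\star$ for the joint risk $\risk{\Tcal}{\cdot} + \risk{\Scal}{\cdot}$ rewrites as $\risk{\Scal}{h'} - \risk{\Tcal}{h'} \ge \risk{\Scal}{h^\star} - \risk{\Tcal}{h^\star}$ for every $h' \in \Hbb$, so the bracketed quantity is at least $\risk{\Scal}{h^\star}$, uniformly in $h'$. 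Combining this with the upper bound yields the claimed equality, and this coupling of the triangle inequality with the joint-risk optimality of $h^\star$ is the one step I expect to require genuine care; the rest is manipulation of infima.
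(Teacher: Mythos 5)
Items 1 and 2 are correct and match the paper's approach in substance; your two-sided argument for item 2 is slightly cleaner than the paper's infimum-swap, and for the $h_\Scal$ case of item 3 you avoid invoking item 2 (which the paper does) by directly lower-bounding each term of the infimum, which is a genuine simplification. However, there is a sign error in the $h^\star$ case of item 3 that makes that part of the argument fail.

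You apply the reverse triangle inequality as $\risk{\Tcal}{h^\star,h'} \ge \risk{\Tcal}{h^\star} - \risk{\Tcal}{h'}$, and then claim that joint optimality of $h^\star$ ``rewrites as'' $\risk{\Scal}{h'} - \risk{\Tcal}{h'} \ge \risk{\Scal}{h^\star} - \risk{\Tcal}{h^\star}$. That rewriting is false: joint optimality says $\risk{\Tcal}{h'} + \risk{\Scal}{h'} \ge \risk{\Tcal}{h^\star} + \risk{\Scal}{h^\star}$, which is equivalent to $\risk{\Scal}{h'} - \risk{\Scal}{h^\star} \ge \risk{\Tcal}{h^\star} - \risk{\Tcal}{h'}$, not to the difference inequality you wrote. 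Concretely, take $\risk{\Tcal}{h'} = 5$, $\risk{\Scal}{h'} = 0$, $\risk{\Tcal}{h^\star} = \risk{\Scal}{h^\star} = 1$: joint optimality holds ($5 \ge 2$), yet your bracketed quantity $\risk{\Tcal}{h^\star} - \risk{\Tcal}{h'} + \risk{\Scal}{h'} = -4$ is not $\ge \risk{\Scal}{h^\star} = 1$. The fix is to use the triangle inequality in the \emph{other} direction: by symmetry of $l$,
\begin{equation*}
\risk{\Tcal}{h^\star,h'} \ge \risk{\Tcal}{h'} - \risk{\Tcal}{h^\star},
\end{equation*}
so that
\begin{equation*}
\urisk{\Hbb}{h^\star} \ge \inf_{h' \in \Hbb}\bigl[\risk{\Tcal}{h'} + \risk{\Scal}{h'}\bigr] - \risk{\Tcal}{h^\star} = \bigl[\risk{\Tcal}{h^\star} + \risk{\Scal}{h^\star}\bigr] - \risk{\Tcal}{h^\star} = \risk{\Scal}{h^\star},
\end{equation*}
where the middle equality is exactly joint optimality. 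This is in effect what the paper does (it adds $\risk{\Tcal}{h^\star}$ to both sides before applying the triangle inequality, which forces the correct orientation).
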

In \Cref{prop:sourceGuidedUncertainty}, the first point justifies the intuition that the source-guided uncertainty is small whenever the scoring function $g$ performs well on $\Scal$ (in terms of the risk of $h_g$), while having a low uncertainty on $\Tcal$ (\ie, a low $\risk[2]{\Tcal}{g, h_g}$). And when specialized to the case where $g$ is a classifier in $\Hbb$, it provides a lower bound for the risk on the source domain. The second point shows that when the space of scoring functions is rich enough to contain classifiers, the source-guided uncertainty coincides with the best achievable $l^2-$risk on the source domain. The last point states an interesting fact about the case of equality with the source risk, achieved for the ideal joint hypothesis that normally requires access to target labels, and for the best source hypothesis.
\par Our next result shows the role of the source-guided uncertainty in tightening \emph{any} domain adaptation bound that comprises the risk of the considered classifier on the source domain.
\begin{restatable}[Tightening bounds on the target risk]{proposition}{propSourceGuidedUncertaintyBound}\label{prop:sourceGuidedUncertaintyBound}
Let $g$ be a scoring function, and let $l^1$ be a loss function, verifying $l^1(u,y_1) - l^2(y_2,y_1) \leq l^1(u,y_2)\ \forall u \in \Rset^K, y_1, y_2 \in \Ycal$. Then given a bound on the target risk of the following form:
\begin{equation}
    \risk[2]{\Tcal}{h} \leq \risk[2]{\Scal}{h} + A(\Tcal,\Scal); \quad \forall h \in \Hbb,
\end{equation}
where $A(\Tcal,\Scal)$ reflects a relatedness between the domains' joint distributions, we have for any scoring function $g$
\begin{equation}
    \risk[1]{\Tcal}{g} \leq \urisk[1,2]{\Hbb}{g} + A(\Tcal,\Scal).
\end{equation}
\end{restatable}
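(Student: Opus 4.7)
The plan is to use the loss inequality $l^1(u,y_1) - l^2(y_2,y_1) \leq l^1(u,y_2)$ as a triangle-type inequality relating $l^1$-distances from $g(\x)$ to two different labels, by way of $l^2$ measuring the gap between those labels. The target risk $\risk[1]{\Tcal}{g}$ is then to be bounded pointwise by inserting the prediction $h(\x)$ of an arbitrary $h\in\Hbb$ as an intermediate anchor.

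More precisely, I would first apply the assumed inequality pointwise with $u = g(\x)$, $y_2 = h(\x)$ and $y_1 = y$ (the true target label), obtaining
\begin{equation*}
    l^1(g(\x),y) \leq l^1(g(\x),h(\x)) + l^2(h(\x),y).
\end{equation*}
Taking expectation over $(\x,y)\sim\Tcal$ and unfolding the definitions of $\risk[1]{\Tcal}{g,h}$ and $\risk[2]{\Tcal}{h}$ yields
\begin{equation*}
    \risk[1]{\Tcal}{g} \leq \risk[1]{\Tcal}{g,h} + \risk[2]{\Tcal}{h}.
\end{equation*}
Now I plug in the hypothesised adaptation bound $\risk[2]{\Tcal}{h} \leq \risk[2]{\Scal}{h} + A(\Tcal,\Scal)$, which is valid for every $h\in\Hbb$, to get
\begin{equation*}
    \risk[1]{\Tcal}{g} \leq \risk[1]{\Tcal}{g,h} + \risk[2]{\Scal}{h} + A(\Tcal,\Scal).
\end{equation*}

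Finally, since this holds for every $h \in \Hbb$ and the left-hand side does not depend on $h$, taking the infimum over $h\in\Hbb$ on the right gives exactly $\urisk[1,2]{\Hbb}{g} + A(\Tcal,\Scal)$, which is the desired conclusion. There is no real obstacle here: the loss hypothesis is tailor-made to allow $h(\x)$ to act as an intermediate ``label'' between $g(\x)$ and $y$, and the only subtle point to flag is that $A(\Tcal,\Scal)$ is independent of $h$, so it passes outside the infimum cleanly. It is worth noting that for $l^1 = l^2 = l$ satisfying the triangle inequality, the loss assumption is automatically satisfied, recovering the classical $l$-Lipschitz-like setup.
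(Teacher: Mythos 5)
Your proof is correct and is essentially the paper's own argument, just presented in the opposite order: you apply the loss assumption pointwise to get $\risk[1]{\Tcal}{g} \leq \risk[1]{\Tcal}{g,h} + \risk[2]{\Tcal}{h}$, substitute the hypothesized bound per $h$, and then take the infimum, whereas the paper first writes $\risk[1]{\Tcal}{g} - \sup_{h}\left(\risk[2]{\Tcal}{h} - \risk[2]{\Scal}{h}\right) \leq \urisk[1,2]{\Hbb}{g}$ and then bounds the supremum by $A(\Tcal,\Scal)$. The two decompositions are equivalent, and your closing remark about $l^1 = l^2 = l$ obeying the triangle inequality matches the discussion after the proposition in the paper.
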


The statement of \Cref{prop:sourceGuidedUncertaintyBound} holds for a large class of domain adaptation bounds on the target risk, and according to point 1 of \Cref{prop:sourceGuidedUncertainty}, when $g \in Hbb$, it theoretically shows that the source-guided uncertainty provides a bound that is tighter than classic bounds involving the risk on the source domain. For scoring functions, this tightness may justify the empirical effectiveness of the conditional entropy minimization (Example \ref{expl:crossEntropyLoss}), as done in  \citep{shu_dirt-t_2018,kirchmeyer2022mapping}.
Beyond the case of $g \in \Hbb$, we have a bound that holds even when space $\Hbb$ is richer than the hypothesis space from which we select $g$. In the next section, we will use this fact to derive bounds with several families of divergences between the two domains, after we specialize $A(\Tcal,\Scal)$ to the form of the sum of a divergence term and a joint minimum risk \citep{ben-david_theory_2010,acuna2021f}.  Concerning the assumption on the loss function, it reduces to the triangle inequality when both are equal. It also holds, for example, for $l^1$ and $l_2$ chosen as the cross-entropy loss and the $L^1$ loss up to a multiplicative factor.

\section{Relating weak alignment and localization}
\label{sec:weak_alignment_localization}
So far, we theoretically justified the role of the source-guided uncertainty term on the target domain. Now we focus on a special class of bounds that assume low value of the ideal joint risk. Our results will involve relaxed divergences that do not require equality of the two distributions to be null, a property shared by the $\HdH-$distance \citep{ben-david_theory_2010} and its generalization the $l-$discrepancy \citep{mansour_domain_2009}. They will also be asymmetric, as in the case of the recently introduced $\beta-$admissible distances \citep{wu2019domain} and localized discrepancies \citep{zhang2020localized}. In order to proceed, we begin by the following definitions for localized sets.
\begin{definition}[$(\epsilon,\Scalx)-$localized space of nonnegative functions]\label{def:localizedSpace}
    Let $\Fbb$ be a set of nonnegative functions. For $\epsilon \geq 0$, the $(\epsilon,\Scalx)$ localized subset of $\Fbb$ is defined as 
    \begin{equation}
        \Fbb_\epsilon = \{f \in \Fbb; \quad \esp{\Scalx}{f} \leq \epsilon\}
    \end{equation}
\end{definition}
An instance of \Cref{def:localizedSpace} allows to define the \emph{localized discrepancy} introduced in \citet{zhang2020localized}, by choosing
$\Fbb = l(\Hbb,f_\Scal) \coloneqq \{l(h,f_\Scal); h \in \Hbb\}$, where $l$ is a loss function. 
As indicated in \citet{zhang2020localized}, the previous example is motivated by the following observation: If the ideal joint risk is $\lambda = \inf_{h \in \Hbb}\risk{\Tcal}{h} + \risk{\Scal}{h}$, then its value will not change when restricting the choice of $h \in \Hbb$ to hypotheses achieving risk at most $\lambda$ on the source domain. In other words, among all of the hypotheses that perform well on $\Scal$, there is one that has a low risk on both domains.
\par In addition the previous notion of localization, we introduce a family of dissimilarities between measures over the feature space $\Xcal$ in the following definition. 
\begin{definition}[Integral Measure Discrepancy]
\label{def:IMD}
Let $\Fbb$ be a family of nonnegative functions over $\Xcal$, containing the null function. The Integral Measure Discrepancy (IMD) associated to $\Fbb$ between two nonnegative finite measures $\Qcal_1$ and $\Qcal_2$ over $\Xcal$ is
\begin{equation*}
	\label{eq:IMD}
	\IMD_{\Fbb}(\Qcal_1, \Qcal_2) \coloneqq \sup_{f \in \Fbb}\int f \dd\Qcal_1 - \int f \dd\Qcal_2.
\end{equation*}
\end{definition}
The IMD is obviously a generalization of Integral Probability Metrics (IPM) \citep{zolotarev1984,mueller1997integral} to measures with possibly different masses. The interest in distances between measures with different masses is not new and has been the topic of \citet{steerneman1983total} for the total variation distance and Hellinger distances, \citet{benamou2015iterative} to define Bregman projections and   \citet{chizat2018unbalanced,chizat2018scaling,liero_optimal_2018, fatras2021unbalanced} for the unbalanced optimal transport problem, to name a few. Some of the IMD's basic properties are given by the following proposition.
\begin{restatable}[Properties of the IMD]{proposition}{IMDProperties}
\label{prop:IMDProperties}
	The IMD is nonnegative, satisfies the triangle inequality, and we have $\IMD_{\Fbb}(\Qcal_1,\Qcal_2) = 0$ if $\Qcal_1 \leq \Qcal_2$ (\ie, $q_1 \leq q_2$ when $q_1$ and $q_2$ are the densities of $\Qcal_1$ and $\Qcal_2$). Moreover, for $\Fbb$ rich enough (\ie, containing the continuous functions or the indicator functions), we have $\IMD(\Qcal_1,\Qcal_2)= 0$ only if $\Qcal_2 \geq \Qcal_1$.
\end{restatable}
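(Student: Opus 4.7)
The plan is to address each of the four assertions of the proposition -- nonnegativity, the triangle inequality, the sufficient condition for nullity, and its converse under richness -- in succession, each reducing to a short argument on the defining supremum.

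First, since $\Fbb$ is assumed to contain the null function, the value $0$ is attained in the supremum, so $\IMD_{\Fbb}(\Qcal_1,\Qcal_2)\geq 0$. The triangle inequality follows from the standard decomposition: for every $f\in\Fbb$ and every third nonnegative measure $\Qcal_3$,
\[
\int f\dd\Qcal_1 - \int f\dd\Qcal_3 \;=\; \left(\int f\dd\Qcal_1 - \int f\dd\Qcal_2\right) + \left(\int f\dd\Qcal_2 - \int f\dd\Qcal_3\right) \;\leq\; \IMD_{\Fbb}(\Qcal_1,\Qcal_2) + \IMD_{\Fbb}(\Qcal_2,\Qcal_3),
\]
and taking the supremum over $f$ on the left yields the claim.

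For the third assertion, I would use the sign information in $\Fbb$ in an essential way: if $q_1\leq q_2$ almost everywhere, then for every $f\in\Fbb$ the integrand $f\,(q_2-q_1)$ is nonnegative pointwise (since $f\geq 0$), hence $\int f\dd\Qcal_1 \leq \int f\dd\Qcal_2$. Every element of the supremum is therefore $\leq 0$, and combined with the nonnegativity already proved this forces $\IMD_{\Fbb}(\Qcal_1,\Qcal_2)=0$.

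The only step that calls on the richness assumption is the converse, which I expect to be the main subtlety. Assuming $\IMD_{\Fbb}(\Qcal_1,\Qcal_2)=0$ yields $\int f\dd\Qcal_1 \leq \int f\dd\Qcal_2$ for every $f\in\Fbb$. If $\Fbb$ contains the indicator of every Borel set $A\subseteq\Xcal$, this immediately gives $\Qcal_1(A)\leq\Qcal_2(A)$, i.e., $\Qcal_2\geq\Qcal_1$ as measures. If $\Fbb$ contains instead all nonnegative continuous functions, I would approximate, for any open $U\subseteq\Xcal$, the indicator $\mathbf{1}_U$ from below by the continuous functions $f_n(\x)=\min\{n\,d(\x,\Xcal\setminus U),\,1\}$, apply the inequality to each $f_n\in\Fbb$, and invoke monotone convergence to obtain $\Qcal_1(U)\leq\Qcal_2(U)$; since $\Xcal$ is compact metric, outer regularity of finite Borel measures then extends this inequality to arbitrary Borel sets. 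The two points to flag explicitly in writing this up are that $0\in\Fbb$ (used both for nonnegativity and for promoting ``$\leq 0$'' to ``$=0$''), and that the approximating sequence $(f_n)$ genuinely lies inside $\Fbb$ so that the defining inequality is applicable to every term.
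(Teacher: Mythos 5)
Your argument matches the paper's for nonnegativity, the triangle inequality, the sufficiency direction, and the indicator-richness case, so the overall structure is the same. The only divergence is in the continuous-richness converse: the paper fixes a compact $K$, approximates $\mathbf{1}_K$ \emph{from above} by a Urysohn-type bump supported in a neighborhood $O\supseteq K$, and uses outer regularity of $\Qcal_2$ to control $\Qcal_2(O\setminus K)\le\epsilon$; you instead approximate $\mathbf{1}_U$ for open $U$ \emph{from below} by the truncated distance functions $f_n$, pass to the limit by monotone convergence, and then extend by outer regularity to Borel sets. These are dual constructions proving the same inequality, and your version is marginally cleaner since it avoids the explicit $\epsilon$-bookkeeping; your care in checking $f_n\in\Fbb$ and that $0\in\Fbb$ promotes "$\le 0$'' to "$=0$'' is exactly the right thing to flag.
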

In particular, \Cref{prop:IMDProperties} shows that the IMD is asymmetric. Indeed, it is sufficient to take a measure $\Qcal$ that is non identically null, then $\IMD_\Fbb(\Qcal,2\Qcal) = 0$ while $\IMD(2\Qcal,\Qcal)>0$ whenever $\Fbb$ contains a function with $\int f \dd\Qcal>0$.
\par With the previously introduced quantities, we are now ready to state our domain adaptation bound involving the IMD and localization.
\begin{restatable}
	{proposition}{propBoundLocalizedIMD}
\label{prop:BoundLocalizedIMD}
Let $\Hbb$ be a hypothesis space, $g$ a scoring function not necessarily in $\Hbb$, and $l$ a loss function verifying the triangle inequality. Assume that $l(\Hbb,\Hbb)\coloneqq \{l(h_1(\cdot),h_2(\cdot)); h_1, h_2 \in \Hbb\} \subseteq \Fbb,$ where $\Fbb$ is a set of bounded nonnegative functions. Also, for any $r\geq0$, we consider the localized hypothesis space $\Hbb^r \coloneqq \{h \in \Hbb; \risk{\Scal}{h} \leq r\}$.
\par Then, for any $r_1, r_2 \geq 0$,
        \begin{equation}
        \risk{\Tcal}{g} \leq \urisk{\Hbb^{r_1}}{g} + \IMD_{\Fbb_{r_1 + r_2}}(\Tcalx,\Scalx) + \inf_{h \in \Hbb^{r_2}}\risk{\Tcal}{h} + \risk{\Scal}{h}.
        \end{equation}
\end{restatable}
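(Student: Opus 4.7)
The plan is to first derive a classical-style target-risk bound that is valid for all hypotheses $h\in\Hbb^{r_1}$ and has the right hand side $\risk{\Scal}{h} + A(\Tcal,\Scal)$, and then to feed it into \Cref{prop:sourceGuidedUncertaintyBound} (noting that the triangle inequality for $l$ implies the assumption of that proposition when $l^1=l^2=l$). The non-trivial part is setting things up so that the function class invoked in the IMD is the localized one $\Fbb_{r_1+r_2}$.

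Concretely, fix any $h \in \Hbb^{r_1}$ and any auxiliary $h' \in \Hbb^{r_2}$. Since $l$ satisfies the triangle inequality, I would first write
\begin{equation*}
\risk{\Tcal}{h} \leq \risk{\Tcal}{h,h'} + \risk{\Tcal}{h'}.
\end{equation*}
The key observation is that the map $\x \mapsto l(h(\x),h'(\x))$ lies in $\Fbb$ (by the assumption $l(\Hbb,\Hbb)\subseteq\Fbb$), and moreover its expectation under $\Scalx$ is exactly $\risk{\Scal}{h,h'}$, which by a second application of the triangle inequality is at most $\risk{\Scal}{h}+\risk{\Scal}{h'}\leq r_1+r_2$. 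Thus $l(h,h') \in \Fbb_{r_1+r_2}$, and the definition of the IMD yields
\begin{equation*}
\risk{\Tcal}{h,h'} - \risk{\Scal}{h,h'} \leq \IMD_{\Fbb_{r_1+r_2}}(\Tcalx,\Scalx).
\end{equation*}

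Combining the two displays and again bounding $\risk{\Scal}{h,h'}\leq \risk{\Scal}{h}+\risk{\Scal}{h'}$ gives, for every $h\in\Hbb^{r_1}$ and $h'\in\Hbb^{r_2}$,
\begin{equation*}
\risk{\Tcal}{h} \leq \risk{\Scal}{h} + \IMD_{\Fbb_{r_1+r_2}}(\Tcalx,\Scalx) + \bigl(\risk{\Scal}{h'}+\risk{\Tcal}{h'}\bigr).
\end{equation*}
Taking the infimum over $h'\in\Hbb^{r_2}$ on the right produces a bound of exactly the form $\risk{\Tcal}{h} \leq \risk{\Scal}{h} + A(\Tcal,\Scal)$ valid for all $h$ in the hypothesis space $\Hbb^{r_1}$, with $A(\Tcal,\Scal) = \IMD_{\Fbb_{r_1+r_2}}(\Tcalx,\Scalx) + \inf_{h'\in\Hbb^{r_2}}[\risk{\Tcal}{h'}+\risk{\Scal}{h'}]$.

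Finally, invoking \Cref{prop:sourceGuidedUncertaintyBound} with $\Hbb$ replaced by $\Hbb^{r_1}$ and with $l^1=l^2=l$ immediately upgrades this bound from hypotheses $h\in\Hbb^{r_1}$ to an arbitrary scoring function $g$, replacing the source-domain risk by $\urisk{\Hbb^{r_1}}{g}$ and giving the claim. The only subtle step is the one where localization enters, namely verifying that the loss-difference function built from $h\in\Hbb^{r_1}$ and $h'\in\Hbb^{r_2}$ lands in the localized class $\Fbb_{r_1+r_2}$; everything else is a clean assembly of the triangle inequality, the IMD inequality, and the previous proposition.
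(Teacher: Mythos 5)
Your proof is correct and follows essentially the same route as the paper's: both fix $h\in\Hbb^{r_1}$ and an auxiliary $h'\in\Hbb^{r_2}$, apply the triangle inequality twice to peel off $\risk{\Tcal}{h,h'}-\risk{\Scal}{h,h'}$, verify via $\risk{\Scal}{h,h'}\leq\risk{\Scal}{h}+\risk{\Scal}{h'}\leq r_1+r_2$ that $l(h,h')$ lies in the localized class $\Fbb_{r_1+r_2}$, bound by the IMD, take the infimum over $h'$, and then invoke \Cref{prop:sourceGuidedUncertaintyBound} to replace $\risk{\Scal}{\cdot}$ by the source-guided uncertainty. The only cosmetic difference is that the paper routes through the set $l(\Hbb^{r_1},\Hbb^{r_2})$ and its inclusion in $\Fbb_{r_1+r_2}$, whereas you observe the membership pointwise, which is slightly more direct but not a different argument.
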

\Cref{prop:BoundLocalizedIMD} is a generalization of the ones in \citet{ben-david_theory_2010,zhang_bridging_2019,acuna2021f}, as it involves an ideal joint risk. It is based on  localized hypothesis spaces that have been considered for DA in \citet{zhang2020localized}; however, our result extends the latter in the following aspects. First, as we pointed out in \Cref{sec:source_guided_uncertainty}, the concerned hypothesis $g$ does not have to be in $\Hbb$, and our bound is tighter than using a source loss whenever $g \in \Hbb^r$, a condition that is required in the analogous result of \citet{zhang2020localized}. Second, the space $\Fbb$ we consider to define the discrepancy term is more general as it includes the result of \citet{zhang2020localized} for $\Fbb = \HdH$.
\par In the next proposition, we leverage the Lagrange duality to prove an upper bound for the localized IMD, that will hold tightly under mild conditions. It will be the key to establish the connection between localization \citep{zhang2020localized} and $\beta-$admissible distances \citep{wu2019domain}.

\begin{restatable}[Duality for localized IMD]{proposition}{propIMDDuality}
\label{prop:IMDDuality}
    Assume $\Fbb$ is a space of nonnegative functions over $\Xcal$. Then
    \begin{equation}
        \forall \epsilon\geq 0,\quad \IMD_{\Fbb_\epsilon}(\Tcalx,\Scalx) \leq \inf_{\alpha \geq 0}\IMD_{\Fbb}(\Tcalx, (1+\alpha)\Scalx) + \epsilon\alpha.
    \end{equation}
    Moreover, if $\Fbb$ is convex, $\IMD_{\Fbb}(\Tcalx,\Scalx)$ is finite and $\epsilon>0$, then we have an equality, and the infimum at the right hand side is achieved for some $\alpha^*\geq 0$.
\end{restatable}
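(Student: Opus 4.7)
The plan is to recognize the localized IMD as a constrained supremum problem, namely $\sup_{f\in\Fbb}\int f\,\dd(\Tcalx-\Scalx)$ subject to $\int f\,\dd\Scalx\leq\epsilon$, and to apply Lagrange duality with multiplier $\alpha\geq 0$ attached to this constraint. Maximizing the Lagrangian $\int f\,\dd(\Tcalx-\Scalx)+\alpha\bigl(\epsilon-\int f\,\dd\Scalx\bigr)$ over $f\in\Fbb$ produces exactly $\IMD_{\Fbb}(\Tcalx,(1+\alpha)\Scalx)+\alpha\epsilon$, so the claimed inequality is weak duality and the equality case is strong duality.

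For the weak-duality direction, I fix any $\alpha\geq 0$ and any $f\in\Fbb_\epsilon$ and use $\alpha(\epsilon-\int f\,\dd\Scalx)\geq 0$ to write
\begin{equation*}
	\int f\,\dd(\Tcalx-\Scalx)\leq \int f\,\dd(\Tcalx-(1+\alpha)\Scalx)+\alpha\epsilon.
\end{equation*}
Taking the supremum on the left over $f\in\Fbb_\epsilon$ and enlarging the right-hand supremum to $f\in\Fbb$ gives $\IMD_{\Fbb_\epsilon}(\Tcalx,\Scalx)\leq \IMD_{\Fbb}(\Tcalx,(1+\alpha)\Scalx)+\alpha\epsilon$, and taking the infimum over $\alpha\geq 0$ finishes this half.

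For the equality under the convexity, finiteness, and positivity assumptions, I would introduce the primal value function $v(\delta)\coloneqq \IMD_{\Fbb_\delta}(\Tcalx,\Scalx)$ on $[0,\infty)$ and verify three properties: (i) $v$ is concave, since convexity of $\Fbb$ permits mixing two feasible competitors $f_1,f_2$ into $\lambda f_1+(1-\lambda)f_2\in\Fbb_{\lambda\delta_1+(1-\lambda)\delta_2}$ and the objective is linear in $f$; (ii) $v$ is nondecreasing because $\Fbb_\delta$ grows with $\delta$; and (iii) $v$ is finite, being squeezed between $0$ (attained by the null function $0\in\Fbb$, which lies in every $\Fbb_\delta$) and $\IMD_{\Fbb}(\Tcalx,\Scalx)<\infty$. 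Since $\epsilon>0$ is an interior point of $[0,\infty)$ and $v$ is concave and finite in a neighborhood, standard concave analysis supplies a supergradient $\alpha^*\in\Rset$ at $\epsilon$; monotonicity of $v$ forces $\alpha^*\geq 0$. Applying the supergradient inequality $v(\delta)\leq v(\epsilon)+\alpha^*(\delta-\epsilon)$ at $\delta=\int f\,\dd\Scalx$ for arbitrary $f\in\Fbb$, and noting that $f\in\Fbb_\delta$ so $\int f\,\dd(\Tcalx-\Scalx)\leq v(\delta)$, rearranges to $\int f\,\dd(\Tcalx-(1+\alpha^*)\Scalx)\leq v(\epsilon)-\alpha^*\epsilon$. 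Taking the supremum over $f\in\Fbb$ yields $\IMD_{\Fbb}(\Tcalx,(1+\alpha^*)\Scalx)+\alpha^*\epsilon\leq v(\epsilon)$, which matches weak duality and certifies that $\alpha^*$ achieves the infimum.

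The main obstacle is the existence of the supergradient $\alpha^*$, and it is exactly at this step that each hypothesis is used: convexity of $\Fbb$ is needed to make $v$ concave, finiteness of $\IMD_{\Fbb}(\Tcalx,\Scalx)$ keeps $v$ finite throughout $[0,\infty)$, and $\epsilon>0$ (together with $0\in\Fbb$ giving the Slater-type strict feasibility $\int 0\,\dd\Scalx<\epsilon$) places $\epsilon$ in the interior of the effective domain where concave functions on $\Rset$ admit supergradients. If any of these fails, either concavity, finiteness, or the interior-point condition breaks down and strong duality may be lost.
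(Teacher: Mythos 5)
Your proof is correct. The weak-duality half is identical to the paper's (recognize the localized IMD as a constrained supremum, dualize the localization constraint, apply the sup-inf inequality). The strong-duality half differs in execution: the paper invokes a black-box strong Lagrangian duality theorem from Luenberger, verifying the hypotheses (convexity of $\Fbb$, finiteness of the unconstrained value, Slater's condition via $0 \in \Fbb$ and $\epsilon > 0$, linearity of objective and constraint), whereas you prove strong duality from first principles via the perturbation (value) function $v(\delta) = \IMD_{\Fbb_\delta}(\Tcalx,\Scalx)$: you establish that it is concave, nondecreasing, and finite on $[0,\infty)$, then extract a nonnegative supergradient at the interior point $\epsilon$ and show it is exactly the optimal multiplier $\alpha^*$. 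Your route is more self-contained and exposes precisely how each hypothesis enters (convexity for concavity of $v$, finiteness to keep $v$ finite, $\epsilon>0$ for interiority), at the cost of being a bit longer; the paper's route is terser but relies on an external reference. Both are valid.

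One small point worth tightening if you were to write this out in full: the step ``monotonicity of $v$ forces $\alpha^* \geq 0$'' deserves a one-line justification, e.g.\ if $\alpha^* < 0$ then the supergradient inequality $v(\delta) \leq v(\epsilon) + \alpha^*(\delta - \epsilon)$ sends the right side to $-\infty$ as $\delta \to \infty$ while $v(\delta) \geq v(\epsilon)$ stays bounded below, a contradiction (this uses that $v$ is finite on the whole ray, which you already established).
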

In spite of having a lower value than IPM's (since $\Fbb_\epsilon\subseteq \Fbb$), a localized IMD is null if and only if both distributions are identical whenever $\Fbb$ is rich enough as expressed in the following corollary.
\begin{restatable}{corollary}{corFLocalizedIsDivergence}\label{cor:FLocalizedIsDivergence}
	If $\Fbb$ is rich enough, then for any $\epsilon>0$, $\IMD_{\Fbb_\epsilon}(\Tcalx,\Scalx) =0$ if and only if $\Tcalx = \Scalx$.
\end{restatable}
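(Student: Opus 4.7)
The plan is to handle the two implications separately, with the forward direction essentially immediate and the reverse direction requiring a measure-theoretic construction.

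For the forward implication, I would assume $\Tcalx = \Scalx$, so that every $f \in \Fbb_\epsilon$ satisfies $\int f \dd\Tcalx - \int f \dd\Scalx = 0$. Since $\Fbb$ contains the null function by \Cref{def:IMD} and that function trivially lies in $\Fbb_\epsilon$ (its $\Scalx$-integral is $0 \leq \epsilon$), the supremum defining $\IMD_{\Fbb_\epsilon}$ is attained and equals $0$.

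For the reverse implication, I would prove the contrapositive: assuming $\Tcalx \neq \Scalx$, exhibit a witness $f \in \Fbb_\epsilon$ with $\int f \dd(\Tcalx - \Scalx) > 0$. First, set $\mu := \Tcalx - \Scalx$; it is a nonzero signed measure with $\mu(\Xcal) = 0$, so Hahn--Jordan decomposition yields a measurable $P$ with $\mu|_P \geq 0$ and $\mu(P) > 0$. Exploiting the standing assumption that both marginals are absolutely continuous w.r.t.\ Lebesgue measure on the compact set $\Xcal$, I would partition $P$ into finitely many measurable pieces $P_1,\dots,P_n$ each of $\Scalx$-measure at most $\epsilon$ (e.g.\ by intersecting with a sufficiently fine grid). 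Since $\mu(P) > 0$, at least one piece $P' := P_i$ must satisfy $\mu(P') > 0$ while still having $\Scalx(P') \leq \epsilon$.

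The final step converts $P'$ into a valid witness. If $\Fbb$ contains indicator functions, then $f = \ones_{P'}$ immediately lies in $\Fbb_\epsilon$ (since $\esp{\Scalx}{f} = \Scalx(P') \leq \epsilon$) and satisfies $\int f \dd(\Tcalx-\Scalx) = \mu(P') > 0$. If instead $\Fbb$ contains only the continuous functions, I would first slightly refine the partition so that $\Scalx(P') < \epsilon$ while keeping $\mu(P') > 0$, then use the outer/inner regularity of $\Scalx$ and $\Tcalx$ (inherited from Lebesgue absolute continuity) to sandwich $P'$ between a compact $K$ and an open $U$ with $\Scalx(U\setminus K)$ and $\Tcalx(U\setminus K)$ small, and apply Urysohn's lemma to build a continuous $f$ with $0 \leq f \leq 1$, $f \equiv 1$ on $K$, $f \equiv 0$ off $U$. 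The main obstacle is this continuous-function approximation: the tolerances must be chosen so that the slack in $\int f\,\dd\Scalx \leq \Scalx(U)$ does not overflow the budget $\epsilon$ while $\int f \dd(\Tcalx - \Scalx)$ remains strictly positive. This is a purely quantitative issue once $P'$ and $\mu(P') > 0$ are in hand, so it should go through without difficulty.
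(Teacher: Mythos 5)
Your proof is correct, but it takes a genuinely different route from the paper's. The paper proves the nontrivial direction by invoking the strong Lagrange duality of Proposition~\ref{prop:IMDDuality}: if $\IMD_{\Fbb_\epsilon}(\Tcalx,\Scalx)=0$, strong duality (which holds because $\epsilon>0$) gives an $\alpha^*\geq 0$ with $\esp{\Tcalx}{f}\leq(1+\alpha^*)\esp{\Scalx}{f}-\epsilon\alpha^*$ for all $f\in\Fbb$; plugging in the null function forces $\alpha^*=0$, so $\IMD_\Fbb(\Tcalx,\Scalx)=0$, and then Proposition~\ref{prop:IMDProperties} (richness) gives $\Tcalx\leq\Scalx$, hence equality of probability measures. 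Your argument bypasses duality entirely and instead directly constructs a witness in $\Fbb_\epsilon$ via Hahn--Jordan decomposition, a non-atomic subdivision of the positive set, and then either an indicator function or a Urysohn-type continuous approximation. What this buys you is independence from the convexity of $\Fbb$ and from the finiteness of the unlocalized $\IMD_\Fbb(\Tcalx,\Scalx)$, both of which the paper's route needs to invoke strong duality; your argument works directly from the richness hypothesis of Proposition~\ref{prop:IMDProperties}. What you lose is brevity: the paper reuses Proposition~\ref{prop:IMDDuality} as a black box and is done in four lines, while your proof essentially re-establishes, on a localized set, the characterization already proved in Proposition~\ref{prop:IMDProperties}. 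One small technical caveat: your ``intersect with a sufficiently fine grid'' step implicitly assumes the density of $\Scalx$ is bounded, which the paper does not guarantee; the fix is to appeal instead to the non-atomicity of $\Scalx$ (automatic from absolute continuity) and the fact that a non-atomic finite measure admits partitions of any set into pieces of arbitrarily small measure (Sierpi\'nski). With that substitution, the construction is sound.
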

 To guard against the drawbacks of strict alignment implying that $\Tcalx = \Scalx$ \citep{zhao_learning_2019,wu2019domain,le2021lamda}, we apply \Cref{prop:IMDDuality} via bounding the infimum over the choice of $\alpha\geq 0$ by an arbitrary $\beta\geq 0$, thus leading to the following corollary.
\begin{restatable}{corollary}{corBoundLocalizedIMDBeta}
\label{cor:BoundLocalizedIMDBeta}
    With the assumptions of \Cref{prop:BoundLocalizedIMD}, for a scoring function $g$, any $r_1, r_2, \beta \geq 0$, we have
        \begin{equation}
        \risk{\Tcal}{g} \leq \urisk{\Hbb^{r_1}}{g} + \IMD_\Fbb(\Tcalx, (1+\beta)\Scalx) + \beta (r_1 + r_2)+ \inf_{h \in \Hbb^{r_2}}\risk{\Tcal}{h} + \risk{\Scal}{h}
        \end{equation}
\end{restatable}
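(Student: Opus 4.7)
The plan is to derive the corollary as a direct chaining of the two preceding results: Proposition \ref{prop:BoundLocalizedIMD} produces the bound involving the localized IMD $\IMD_{\Fbb_{r_1+r_2}}(\Tcalx,\Scalx)$, and Proposition \ref{prop:IMDDuality} then lets us trade that localization for the unlocalized IMD between $\Tcalx$ and a rescaled source measure $(1+\alpha)\Scalx$, at the price of an additive term $\epsilon\alpha$ with $\epsilon = r_1+r_2$.

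First I would apply Proposition \ref{prop:BoundLocalizedIMD} verbatim, which yields, for any $r_1,r_2\geq 0$,
\begin{equation*}
\risk{\Tcal}{g} \leq \urisk{\Hbb^{r_1}}{g} + \IMD_{\Fbb_{r_1+r_2}}(\Tcalx,\Scalx) + \inf_{h \in \Hbb^{r_2}}\risk{\Tcal}{h} + \risk{\Scal}{h}.
\end{equation*}
Next, I would invoke Proposition \ref{prop:IMDDuality} with $\epsilon \coloneqq r_1 + r_2$; this is legitimate because $\Fbb$ is a set of nonnegative functions by the hypothesis of Proposition \ref{prop:BoundLocalizedIMD}. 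The proposition gives
\begin{equation*}
\IMD_{\Fbb_{r_1+r_2}}(\Tcalx,\Scalx) \leq \inf_{\alpha \geq 0} \bigl[\IMD_{\Fbb}(\Tcalx,(1+\alpha)\Scalx) + (r_1+r_2)\alpha\bigr].
\end{equation*}

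Finally, for any fixed $\beta \geq 0$ the infimum over $\alpha \geq 0$ is upper bounded by the value at $\alpha = \beta$, so
\begin{equation*}
\IMD_{\Fbb_{r_1+r_2}}(\Tcalx,\Scalx) \leq \IMD_{\Fbb}(\Tcalx,(1+\beta)\Scalx) + \beta(r_1+r_2).
\end{equation*}
Substituting this into the first inequality gives exactly the statement of the corollary.

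There is no real obstacle: the whole content of the corollary is to weaken the tight duality bound of Proposition \ref{prop:IMDDuality} (which requires convexity of $\Fbb$ to be an equality) into a one-sided inequality valid for any $\beta \geq 0$, and to slot it into the localization-based bound. The only thing to check is that the hypotheses of Proposition \ref{prop:IMDDuality} — nonnegativity of the functions in $\Fbb$ — are inherited from the setup of Proposition \ref{prop:BoundLocalizedIMD}, which they are.
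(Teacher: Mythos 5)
Your proof is correct and matches the paper's own derivation exactly: apply Proposition \ref{prop:BoundLocalizedIMD}, then apply the inequality from Proposition \ref{prop:IMDDuality} with $\epsilon = r_1 + r_2$, and bound the infimum over $\alpha \geq 0$ by its value at $\alpha = \beta$. The paper even states this plan explicitly in the paragraph preceding the corollary, and your check that the nonnegativity hypothesis on $\Fbb$ carries over is the right one.
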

 Corollary \ref{cor:BoundLocalizedIMDBeta} is reminiscent of the DA bound in \citet{wu2019domain}, and shows the interest of $\beta-$admissible distances through a different derivation. Indeed, we do not have a $(1+\beta)$ factor multiplying the source risk of $g$ as in their work. Moreover, as implied by \Cref{prop:IMDProperties}, the discrepancy $\IMD_\Fbb(\Tcalx,(1+\beta)\Scalx)$ is always nonnegative and is null if and only if $t(\x) \leq (1+\beta)s(\x)$ almost surely for $\Fbb$ rich enough. The latter condition is the bounded ratio condition required to define $\beta-$admissible distances \citep{wu2019domain}. However, when $\Fbb$ is more restricted, it generalizes it in the same way that Integral Probability Metrics do not need the density ratio to be defined, compared to Csiszár divergences \citep{csiszar1967information}.
 \par In the next section, we will utilize the link between localization and $\beta-$admissible in order to extend these notions by taking into account the categorical structure of the source domain.
\section{Incorporating the source's categorical structure}\label{sec:categorical_structure}
A limitation of the localization introduced \citet{zhang2020localized} is that it is a global consideration of the performance on the source domain, without a finer look at the performance per class. In the following definition, we introduce a stricter notion of localization that requires the bound on the expectation to hold per class.
\begin{definition}[$(\epsbf,\Scalxy)-$localized space of nonnegative functions]
\label{def:localizedSpacePerClass}
    Let $\Fbb$ be a set of nonnegative measurable functions over $\Xcal$. For $\epsbf = (\epsilon_1, \cdots, \epsilon_K) \geq 0$, the $(\epsbf,\Scalxy)$ localized subset of $\Fbb$ is defined as
    \begin{equation}
        \begin{aligned}
            \Fbb_\epsbf &= \{f\in\Fbb;\quad\esp{\Scal_{\Xcal|k}}{f} \leq \epsilon_k\quad\forall 1\leq k\leq K\}
        \end{aligned}
    \end{equation}
\end{definition}
Choosing $\Fbb = l(\Hbb,f_\Scal)$ corresponds to a stricter notion of localization than the one introduced in \citet{zhang2020localized}, as it requires the hypothesis to have a good performance for every class on the source domain. The intuition behind this requirement is that the considered hypothesis space must contain an ideal joint hypothesis with a performance that should not depend on the class proportions  of the source domain. Requiring a low classification risk per class has already been theoretically considered in the definition of the \emph{Balanced Error Rate} of \citet{tachet2020domain}, in which the authors use it to bound on the target risk.
\par From now on, we define $\p \coloneqq (\Scaly[y=1],\cdots, \Scaly[y=k]) \in \Delta_K$, the vector of class proportions for the source domain. Also, we will refer to localization considered in \Cref{def:localizedSpace} and \ref{def:localizedSpacePerClass} respectively as \emph{global} and \emph{per-class} localization. The two notions are linked as stated by the following proposition.
\begin{restatable}{proposition}{propFepsRealVectorInclusion}\label{prop:FepsRealVectorInclusion}
    If $\epsbf = (\epsilon_1,\dots,\epsilon_K)^T \geq 0$\footnote{We write $\u\geq 0$ for $\u \in \Rset_+^p$.}, then
        $\Fbb_\epsbf \subseteq \Fbb_{\p^T\epsbf}$. Conversely, if $\etabf = (\eta_1,\cdots,\eta_K) \in \{\Rset_+ \cup \{\infty\} \}^K$ with $\eta_k = \cfrac{\epsilon}{\Scaly[y=k]}$ if $\Scaly[y=k]>0$ and $\eta_k = \infty$ otherwise. Then 
        $\Fbb_\epsilon \subseteq \Fbb_{\etabf}$.
\end{restatable}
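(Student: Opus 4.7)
\begin{proof_idea}
The proof is built entirely on the law of total expectation
\[
\esp{\Scalx}{f} = \sum_{k=1}^K \Scaly[y=k]\,\esp{\Scal_{\Xcal|k}}{f} = \sum_{k=1}^K p_k\,\esp{\Scal_{\Xcal|k}}{f},
\]
combined with the nonnegativity of every $f\in\Fbb$, which allows the per-class contributions to be discarded or summed freely.

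For the first inclusion, I would take an arbitrary $f\in\Fbb_\epsbf$, so by \Cref{def:localizedSpacePerClass} we have $\esp{\Scal_{\Xcal|k}}{f}\leq \epsilon_k$ for every $k$. Plugging these bounds into the total expectation identity above yields $\esp{\Scalx}{f}\leq \sum_k p_k\epsilon_k = \p^T\epsbf$, hence $f\in\Fbb_{\p^T\epsbf}$ by \Cref{def:localizedSpace}.

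For the second (converse) inclusion, let $f\in\Fbb_\epsilon$, so $\esp{\Scalx}{f}\leq \epsilon$. For every index $k$ with $p_k>0$, I would use $f\geq 0$ to drop all terms $j\neq k$ in the total expectation identity, giving
\[
p_k\,\esp{\Scal_{\Xcal|k}}{f} \leq \sum_{j=1}^K p_j\,\esp{\Scal_{\Xcal|j}}{f} = \esp{\Scalx}{f} \leq \epsilon,
\]
which rearranges to $\esp{\Scal_{\Xcal|k}}{f}\leq \epsilon/p_k = \eta_k$. For indices $k$ with $p_k=0$, the definition sets $\eta_k=\infty$ and the inequality holds vacuously (and the class-conditional $\Scal_{\Xcal|k}$ is irrelevant anyway). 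Thus $f\in\Fbb_{\etabf}$.

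There is no real obstacle here; the only minor care points are to explicitly invoke $f\geq 0$ when dropping non-$k$ summands in the converse direction, and to handle the degenerate $p_k=0$ case using the $\infty$ convention built into the statement.
\end{proof_idea}
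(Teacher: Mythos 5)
The paper does not actually include an explicit proof of \Cref{prop:FepsRealVectorInclusion} in its appendix (every other restatable result has a corresponding \texttt{*} re-use there, but this one does not), so there is no in-paper proof to compare against; the authors evidently regard the statement as routine. Your argument is the natural one and it is complete and correct: decomposing $\esp{\Scalx}{f}=\sum_{k}p_k\,\esp{\Scal_{\Xcal|k}}{f}$ via the law of total expectation gives the forward inclusion immediately, and the reverse inclusion follows from the nonnegativity of $f\in\Fbb$, which lets you bound a single summand $p_k\,\esp{\Scal_{\Xcal|k}}{f}$ by the whole sum; your handling of the degenerate $p_k=0$ case through the $\eta_k=\infty$ convention is exactly right.
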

An interpretation for the first result of \Cref{prop:FepsRealVectorInclusion} is that a hypothesis having a risk at  most $\epsilon$ per class will have a global risk at most $\epsilon$. For the second one, it shows that we can obtain localization per class at the price of losing in accuracy by dividing it by the class proportions. For example, on a class-balanced source domain, having an $l-$risk of at most $\epsilon>0$ implies having a risk of at most $K\epsilon$ per class.

\par Of course, the reasoning we used to link global localization and $\beta-$admissible distances through Propositions \ref{prop:BoundLocalizedIMD},\ref{prop:IMDDuality} and Corollary \ref{cor:BoundLocalizedIMDBeta} can be performed again for the per-class discrepancy, which leads to the following proposition.
\begin{restatable}{proposition}{propBoundLocalizedIMDBetaPerClass}\label{prop:BoundLocalizedIMDBetaPerClass}
	Let $\Hbb$ be a hypothesis space, $g$ a scoring function, and $l$ a loss function verifying the triangle inequality. Assume that $l(\Hbb,\Hbb)\subseteq \Fbb$, where $\Fbb$ is a set of bounded nonnegative functions. Also, for any $\rbf = (r_1, \cdots, r_K) \geq 0$, we consider the localized hypothesis space $
			\Hbb^{\rbf} \coloneqq \{h \in \Hbb; \risk{\Scal_{\Xcal|k}}{h} \leq r_k\ \forall 1\leq k \leq K\}$.
		\par Then for any $\rbf_1, \rbf_2, \betabf \geq 0$, we have 
		\begin{equation}
			\risk{\Tcal}{g} \leq \urisk{\Hbb^{\rbf_1}}{g} + \IMD_\Fbb\left(\Tcalx, \Scalx + \sum_{k=1}^K \beta_k\Scal_{\Xcal|k}\right) +\betabf^T(\rbf_1 + \rbf_2) + \inf_{h \in \Hbb^{\rbf_2}}\risk{\Tcal}{h} + \risk{\Scal}{h}.
		\end{equation}
\end{restatable}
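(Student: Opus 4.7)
The plan is to reproduce the chain of reasoning that led from \Cref{prop:BoundLocalizedIMD} to \Cref{cor:BoundLocalizedIMDBeta}, but with the per-class localization of \Cref{def:localizedSpacePerClass} substituted for its global counterpart. Concretely, I would first establish a per-class analog of \Cref{prop:BoundLocalizedIMD}, then derive a vector-valued analog of the Lagrange duality of \Cref{prop:IMDDuality}, and finally specialize the dual variable to $\betabf$.

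\textbf{Step 1: per-class analog of \Cref{prop:BoundLocalizedIMD}.} Pick any $h_1 \in \Hbb^{\rbf_1}$ and $h_2 \in \Hbb^{\rbf_2}$. By the triangle inequality applied at the level of the risks, I would write
\begin{equation*}
\risk{\Tcal}{g} \leq \risk{\Tcal}{g,h_1} + \risk{\Tcal}{h_1,h_2} + \risk{\Tcal}{h_2},
\end{equation*}
then add and subtract $\risk{\Scal}{h_1,h_2}$ and upper bound the latter using the triangle inequality by $\risk{\Scal}{h_1} + \risk{\Scal}{h_2}$. Grouping terms yields
\begin{equation*}
\risk{\Tcal}{g} \leq \bigl[\risk{\Tcal}{g,h_1} + \risk{\Scal}{h_1}\bigr] + \bigl[\risk{\Tcal}{h_2}+\risk{\Scal}{h_2}\bigr] + \bigl(\risk{\Tcal}{h_1,h_2} - \risk{\Scal}{h_1,h_2}\bigr).
\end{equation*}
The triangle inequality for $l$ gives $l(h_1,h_2) \leq l(h_1,f_\Scal)+l(f_\Scal,h_2)$, so for each $k$, $\esp{\Scal_{\Xcal|k}}{l(h_1,h_2)} \leq r_{1,k}+r_{2,k}$, which places $l(h_1,h_2)$ inside $\Fbb_{\rbf_1+\rbf_2}$. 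The last parenthesized term is therefore bounded by $\IMD_{\Fbb_{\rbf_1+\rbf_2}}(\Tcalx,\Scalx)$. Taking the infimum of the first bracket over $h_1 \in \Hbb^{\rbf_1}$ produces $\urisk{\Hbb^{\rbf_1}}{g}$, and taking the infimum of the second over $h_2 \in \Hbb^{\rbf_2}$ produces the ideal joint risk term.

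\textbf{Step 2: vector Lagrange duality.} By definition of the per-class localized set,
\begin{equation*}
\IMD_{\Fbb_{\rbf_1+\rbf_2}}(\Tcalx,\Scalx) = \sup_{f \in \Fbb}\Bigl\{\int f\,d\Tcalx - \int f\,d\Scalx \;:\; \esp{\Scal_{\Xcal|k}}{f} \leq r_{1,k}+r_{2,k},\ \forall k\Bigr\}.
\end{equation*}
Introducing nonnegative multipliers $\alpha_1,\dots,\alpha_K$, weak duality gives, for every $\alphabf \geq 0$,
\begin{equation*}
\IMD_{\Fbb_{\rbf_1+\rbf_2}}(\Tcalx,\Scalx) \leq \sup_{f \in \Fbb}\int f\,d\Tcalx - \int f\,d\Bigl(\Scalx + \sum_{k=1}^K \alpha_k \Scal_{\Xcal|k}\Bigr) + \alphabf^T(\rbf_1+\rbf_2),
\end{equation*}
which, since $\Scalx + \sum_k \alpha_k \Scal_{\Xcal|k}$ is a nonnegative finite measure, is exactly $\IMD_\Fbb(\Tcalx,\Scalx+\sum_k \alpha_k \Scal_{\Xcal|k}) + \alphabf^T(\rbf_1+\rbf_2)$. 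This is the strict analog of \Cref{prop:IMDDuality}, just with a vector constraint; the proof is essentially the same computation so I would simply invoke the same weak-duality argument.

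\textbf{Step 3: specialize and conclude.} Setting $\alphabf = \betabf$ in Step 2 and plugging the result into Step 1 gives the stated inequality. The argument is routine once the two ingredients are in place; the only genuinely new ingredient compared to the global case is checking that the triangle inequality transfers to the per-class localization constraints ($\esp{\Scal_{\Xcal|k}}{l(h_1,h_2)} \leq r_{1,k}+r_{2,k}$). This is the only point that could go wrong, and it is immediate from linearity of conditional expectation, so I do not expect any real obstacle.
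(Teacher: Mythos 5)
Your proposal is correct and mirrors the paper's own proof, which likewise proceeds by adapting \Cref{prop:BoundLocalizedIMD} to the per-class sets $\Fbb_{\rbf_1}, \Fbb_{\rbf_2}$, invoking the vector Lagrangian form of \Cref{prop:IMDDuality}, and then instantiating the multiplier at $\betabf$. The step you flag as the only delicate one — that $l(h_1,h_2)$ lands in $\Fbb_{\rbf_1+\rbf_2}$ via the per-class triangle inequality — is indeed the point where the per-class argument departs from the global one, and your handling of it is right.
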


\par The latter result is a generalization of Corollary \ref{cor:BoundLocalizedIMDBeta}. Indeed, for $\rbf_i = r_i\ones$, $i \in \{1,2\}$,\footnote{$\ones$ is the vector with all coordinates equal to 1.} and $\betabf = \beta\p$, where $\beta>0$ is a fixed parameter, we recover the global localization case. However, it requires $K$ parameters to fix (the components of $\betabf$) instead of one scalar $\beta\geq 0$, which is impractical. One can overcome this drawback by noticing the following: The choice of $\beta \p$ can be thought of as a particular splitting of $\beta$'s value across different classes. This naturally hints towards a better splitting. We formalize this observation in the following corollary.
\begin{restatable}{corollary}{corBoundLocalizedIMDBetaPerClassSplitting}\label{cor:BoundLocalizedIMDBetaPerClassSplitting}
	With the assumptions of \Cref{prop:BoundLocalizedIMDBetaPerClass}, let $r_1,r_2,\beta\geq 0$. Besides, let $\rbf_1 = \rbf_2 = r\ones$. Then
	\begin{equation}
		\risk{\Tcal}{g} \leq \urisk{\Hbb^{\rbf_1}}{g} + \min_{\substack{\betabf \geq 0\\\ones^T\betabf \leq \beta}}\quad\IMD_{\Fbb}\left(\Tcalx, \Scalx + \sum_{k=1}^K \beta_k \Scal_{\Xcal|k}\right) + \beta(r_1 + r_2) + \inf_{h \in \Hbb^{\rbf_2}}\risk{\Tcal}{h} + \risk{\Scal}{h}.
	\end{equation}
	Moreover, the inequality in the constraint $\beta\ones\leq 1$ can be replaced by an equality. 
\end{restatable}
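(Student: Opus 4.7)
The plan is to invoke Proposition \ref{prop:BoundLocalizedIMDBetaPerClass} with the specific choice $\rbf_1 = r_1\ones$ and $\rbf_2 = r_2\ones$, and then optimize over the free parameter $\betabf \geq 0$ under the extra budget constraint $\ones^T\betabf \leq \beta$. With this substitution, $\rbf_1+\rbf_2 = (r_1+r_2)\ones$, so the term $\betabf^T(\rbf_1+\rbf_2)$ that appears in Proposition \ref{prop:BoundLocalizedIMDBetaPerClass} collapses to $(r_1+r_2)\,\ones^T\betabf$, which is bounded above by $\beta(r_1+r_2)$ as soon as $\betabf$ lies in the budget set. Substituting this crude upper bound into the right-hand side of Proposition \ref{prop:BoundLocalizedIMDBetaPerClass} leaves only the IMD term as a non-trivial function of $\betabf$; taking the minimum of the resulting inequality over all $\betabf \geq 0$ with $\ones^T\betabf \leq \beta$ then delivers the first inequality of the corollary.

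For the second claim, I would rely on the monotonicity of the IMD in its second argument. Since every $f \in \Fbb$ is non-negative, adding any further non-negative measure $\mu$ to $\Scalx + \sum_k \beta_k \Scal_{\Xcal|k}$ only increases $\int f \dd(\Scalx + \sum_k \beta_k \Scal_{\Xcal|k} + \mu)$ pointwise in $f \in \Fbb$, hence decreases $\int f \dd\Tcalx - \int f \dd(\cdots)$ and therefore also the supremum defining the IMD. Consequently, if a feasible $\betabf^*$ strictly satisfies $\ones^T\betabf^* < \beta$, one can add the slack $\beta - \ones^T\betabf^*$ to any coordinate, obtaining $\tilde\betabf$ with $\ones^T\tilde\betabf = \beta$ and a no-larger IMD. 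Hence the minimum in the constraint set is always attained on the boundary $\ones^T\betabf = \beta$, which is exactly the stated equality form of the constraint.

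The only genuinely non-cosmetic step is the relaxation in the first paragraph: by bounding $(r_1+r_2)\ones^T\betabf$ uniformly by $\beta(r_1+r_2)$ \emph{before} minimizing, one decouples the IMD from the linear penalty and loses the potential gain of choosing $\ones^T\betabf^* < \beta$. This inflation, however, is exactly what turns $\beta(r_1+r_2)$ into a constant and lets the per-class mass allocation $\betabf$ be picked purely to shrink the IMD term, which is the intended interpretation of the result: a principled per-class re-weighting of the source under a fixed total budget $\beta$, generalizing the rigid choice $\betabf = \beta\p$ implicit in Corollary \ref{cor:BoundLocalizedIMDBeta}. No deeper obstacle is expected.
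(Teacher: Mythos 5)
Your proof is correct and follows essentially the same route as the paper's: apply Proposition \ref{prop:BoundLocalizedIMDBetaPerClass} with $\rbf_1=r_1\ones$, $\rbf_2=r_2\ones$, bound $\betabf^T(\rbf_1+\rbf_2)=(r_1+r_2)\ones^T\betabf\leq\beta(r_1+r_2)$ on the budget set, minimize over $\betabf$, and then argue via monotonicity of the IMD in its second argument (coming from non-negativity of $\Fbb$) that the constraint can be saturated. Your justification of that last monotonicity step is in fact slightly more explicit than the paper's, which simply asserts it.
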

Corollary \ref{cor:BoundLocalizedIMDBetaPerClassSplitting} shows the way to make the most of the per-class localization, while a priori fixing only one parameter $\beta$. Such an optimal splitting can be interpreted as a relaxed class reweighting. Indeed, denoting the $K-$dimensional probability simplex by $\Delta_K$, the minimization problem in Corollary \ref{cor:BoundLocalizedIMDBetaPerClassSplitting} is equivalent to 
\begin{equation}\label{pb:beta_relax_reweighting}
	\min_{\substack{\tilde \p \in \Delta_K\\ (1+\beta)\tilde \p \geq \p}}\quad\IMD_{\Fbb}\left(\Tcalx, (1+\beta)\sum_{k=1}^K \tilde p_k \Scal_{\Xcal|k}\right),\\
\end{equation}
which is a combination of $\beta-$relaxation \citep{wu2019domain} and re-weighting approaches \citep{redko2019optimal,tachet2020domain,rakotomamonjy2021optimal,kirchmeyer2022mapping}, although the reweighting is taken over a proper subset of the $K-$ dimensional probability simplex. However, to the best of our knowledge, our result is the first to involve a minimum over source class weights in a DA bound, in contrast with the previously mentioned contributions.

\par We now prove that for label shift, a special case considered in the DA literature \citep{zhang_domain_2013,lipton2018detecting,redko2019optimal} in which the distribution mismatch is due to a shift in the label marginals, the per-class localization is more beneficial than the global one.
\begin{restatable}[Case of label shift]{proposition}{propTargetShift}\label{prop:TargetShift}
	Assume that the source and target distributions verify $\Tcalxy = \Scalxy = \Dcalxy$ for all $y \in \Ycal$. Let $\{q_k\}_{k=1}^K$ denote the target class proportions. Then
	\begin{enumerate}
		\item If $\min_{1\leq k\leq K} p_k >0$ and $\beta \geq \max_{1\leq k\leq K} \left(\frac{q_k}{p_k} - 1\right)_+$, then $\IMD_\Fbb(\Tcalx, (1+\beta)\Scalx) = 0$.
		\item If $\beta_k \geq (q_k - p_k)_+\quad\forall 1\leq k\leq K$, then $\IMD_\Fbb(\Tcalx, \Scalx + \sum_{k=1}^K\beta_k\Scal_{\Xcal|k}) = 0$.
	\end{enumerate}
	Moreover, if $\Fbb$ is rich enough, and if there exists a family $\{B_l\}_{l=1}^K$ of subsets of $\Xcal$ such that $\Dcal_{\Xcal|k}(B_l)>0$ if $k=l$ and $\Dcal_{\Xcal|k}(B_l)=0$ otherwise, then the converses of the two previous statements holds.
\end{restatable}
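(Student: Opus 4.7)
The plan is to reduce everything to the domination criterion for the IMD provided by \Cref{prop:IMDProperties}: $\IMD_\Fbb(\Qcal_1,\Qcal_2)=0$ whenever $\Qcal_1\leq\Qcal_2$ (as densities), with the converse holding once $\Fbb$ is rich enough. Under the label shift assumption, writing $d_k$ for the density of $\Dcal_{\Xcal|k}$, I would first decompose both feature marginals as $\Scalx = \sum_{k=1}^K p_k \Dcal_{\Xcal|k}$ and $\Tcalx = \sum_{k=1}^K q_k \Dcal_{\Xcal|k}$, so that all the measures in the statement become explicit nonnegative combinations of the $\Dcal_{\Xcal|k}$, and the density comparisons reduce to coefficient-wise comparisons in the mixture.

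For the forward (sufficiency) direction of point 1, I would observe that the condition $\beta\geq \max_k (q_k/p_k-1)_+$ is equivalent to $(1+\beta)p_k \geq q_k$ for every $k$ (the case $q_k\leq p_k$ being trivially handled by $\beta\geq 0$), which yields the pointwise inequality $(1+\beta)\sum_k p_k d_k(\x) \geq \sum_k q_k d_k(\x)$, i.e., $\Tcalx \leq (1+\beta)\Scalx$, and \Cref{prop:IMDProperties} concludes. The sufficiency in point 2 is analogous: $\beta_k\geq (q_k-p_k)_+$ yields $p_k+\beta_k\geq q_k$ coefficient-wise, hence $\Tcalx \leq \Scalx + \sum_k \beta_k \Scal_{\Xcal|k}$ at the density level.

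For the converse direction, I would use the richness hypothesis on $\Fbb$, which according to \Cref{prop:IMDProperties} upgrades $\IMD_\Fbb=0$ into the pointwise density inequality $\sum_k q_k d_k(\x) \leq (1+\beta)\sum_k p_k d_k(\x)$ (resp.\ $\leq \sum_k(p_k+\beta_k) d_k(\x)$) almost everywhere. The key trick is then to use the separating family $\{B_l\}_{l=1}^K$ to disentangle the mixtures: integrating this inequality against $\Dcal_{\Xcal|l}$ over the set $B_l$, and exploiting that $\Dcal_{\Xcal|k}(B_l)=0$ for $k\neq l$, collapses the sums on both sides so that only the $l$-th coefficients survive. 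Dividing by $\Dcal_{\Xcal|l}(B_l)>0$ gives $q_l \leq (1+\beta)p_l$ in case 1 (so $\beta\geq q_l/p_l-1$, and combining with $\beta\geq 0$ gives the positive-part bound), and $q_l\leq p_l+\beta_l$ in case 2.

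The main subtlety to watch out for is the interplay between the almost-everywhere density inequality and the integrals over $B_l$: one must be sure the separating sets really isolate a single conditional, which is exactly what the hypothesis $\Dcal_{\Xcal|k}(B_l)=0$ for $k\neq l$ guarantees. Beyond that, all steps are direct applications of \Cref{prop:IMDProperties} and elementary manipulations of finite mixtures, so I do not expect any significant technical obstacle.
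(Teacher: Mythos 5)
Your proposal is correct and follows essentially the same route as the paper: both directions of the sufficiency part reduce to the coefficient-wise comparison $(1+\beta)p_k \geq q_k$ (resp.\ $p_k+\beta_k \geq q_k$) on the mixture decomposition $\Scalx = \sum_k p_k\Dcal_{\Xcal|k}$, $\Tcalx = \sum_k q_k\Dcal_{\Xcal|k}$, and the converse uses the richness criterion of \Cref{prop:IMDProperties} together with evaluation of the resulting measure inequality on the separating sets $B_l$, exactly as in the paper. The only minor imprecision is the phrase ``integrating against $\Dcal_{\Xcal|l}$ over $B_l$''; the paper simply evaluates the signed-measure inequality at the set $B_l$, which is the cleanest way to isolate the $l$-th coefficient, though your variant also works.
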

Statements 1 and 2 of \Cref{prop:TargetShift} show the existence of some $\beta\geq 0$ or $\betabf \geq 0$ in the per-class localization case such that the relaxed IMD between $\Tcalx$ and $\Scalx$ is null. However, in the first case, $\beta$ grows with the ratio $\frac{q_k}{p_k}$ which can go arbitrarily large depending on the class proportions on the source domain, whereas in the case of per class localization, the lower bound on components $\beta_k$ is less prohibitive as it can at most be equal to 1 (since $\sum_{k=1}^K(q_k-p_k)_+ \leq 1$). Hence, the range of $\beta$ to test when implementing the $\beta-$splitting minimization problem of \Cref{cor:BoundLocalizedIMDBetaPerClassSplitting} is bounded. The converse statement relies on the capacity of $\Fbb$, and the existence of regions of $\Xcal$ having only one label. This is a relaxation of the cluster structure assumption \citep{tachet2020domain,kirchmeyer2022mapping} as the family $\{B_l\}_{l=1}^K$ does not need to form a partition of $\Xcal$. Whether the richness assumption can be relaxed (for example, for Lipschitz functions that define the Wasserstein distance) remains an open question.
\section{Some implications for specific choices of \texorpdfstring{$\Fbb$}{}}
\label{sec:special_cases}
In this section, we explore the consequences of some choices of $\Fbb$ on the definition of the relaxed IMD. We will link this choice to that of $\Hbb$, as we assume $l(\Hbb,\Hbb) \subseteq \Fbb$. Before continuing, we denote the support of a probability $\Dcalx$ over $\Xcal$ by $\supp\Dcalx$. We will successively consider $\Fbb$ as the space of all bounded measurable functions (infinite capacity), an arbitrary hypothesis space (typically with a finite VC dimension) and a space of $1-$Lipschitz functions.
\subsection{Bounded functions: revisiting total variation and reweighting}\label{sec:inf_capacity_case}
The following corollary revisits the total variation bound from \citet[Theorem 1]{ben-david_theory_2010}.
\begin{restatable}{proposition}{propDABoundTotalVariation}\label{prop:DABoundTotalVariation}
    Assume $l(\Hbb,\Hbb)$ is the space of all measurable functions bounded by $1$ over $\Xcal$. Denoting $t$ and $s$ the densities of $\Tcalx$ and $\Scalx$ with respect to some dominating measure $\mu$, we have for any scoring function $g$
    \begin{align}
            \risk{\Tcal}{g} &\leq \risk{\Tcal}{g,h_g} + \int \min(s,t)l(h_g,f_s) + d_1(\Tcalx,\Scalx) + \int \min(s,t)l(f_\Scal,f_\Tcal)\label{eq:DABoundInfCapacity}\\
            \risk{\Tcal}{g} &\leq \risk{\Tcal}{g,f_s} + 1-\Tcalx(\supp\Scalx) + \risk{\Tcal}{f_\Scal}\label{eq:DABoundInfCapacityZero}
    \end{align}
    where $d_1$ denotes the total variation distance.
\end{restatable}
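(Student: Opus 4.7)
The plan is to prove both inequalities directly, combining the triangle inequality of $l$ with the pointwise decomposition $t = \min(s,t) + (t-s)_+$, rather than invoking \Cref{prop:BoundLocalizedIMD}. Applying the latter with $\Fbb$ the set of $[0,1]$-valued measurable functions and $r_1=r_2=1$ (so that $\Hbb^{r_i}=\Hbb$) would only yield an upper bound of the form $\risk{\Tcal}{g,h_g} + \risk{\Scal}{h_g} + d_1(\Tcalx,\Scalx) + \risk{\Tcal}{f_\Scal}$, which is strictly weaker than~\eqref{eq:DABoundInfCapacity} because $\int \min(s,t)\, l(h_g,f_\Scal) \dd\mu \leq \risk{\Scal}{h_g}$ and analogously for the joint-ideal-risk term. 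Recovering the sharper $\min(s,t)$ structure thus requires a finer bookkeeping.

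For~\eqref{eq:DABoundInfCapacity}, I would start from the pointwise triangle inequality $l(g(\x),f_\Tcal(\x)) \leq l(g(\x),h_g(\x)) + l(h_g(\x),f_\Tcal(\x))$, integrate against $t$, and identify the first resulting term as $\risk{\Tcal}{g,h_g}$. For the remaining piece $\int t\, l(h_g,f_\Tcal) \dd\mu$, I decompose $t = \min(s,t) + (t-s)_+$. On the $(t-s)_+$ part, the bound $l \leq 1$ (a consequence of the standing assumption on $l(\Hbb,\Hbb)$) gives $\int (t-s)_+\, l(h_g,f_\Tcal) \dd\mu \leq \int (t-s)_+ \dd\mu = d_1(\Tcalx,\Scalx)$. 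On the $\min(s,t)$ part, a second triangle inequality $l(h_g,f_\Tcal) \leq l(h_g,f_\Scal) + l(f_\Scal,f_\Tcal)$ produces exactly the two remaining $\min(s,t)$ integrals of the stated bound.

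For~\eqref{eq:DABoundInfCapacityZero}, the idea is simpler: I partition $\int t\, l(g,f_\Tcal) \dd\mu$ according to whether $\x \in \supp\Scalx$ or not. Off the support, bounding $l \leq 1$ yields the contribution $\Tcalx(\Xcal \setminus \supp\Scalx) = 1 - \Tcalx(\supp\Scalx)$. On the support, the triangle inequality $l(g,f_\Tcal) \leq l(g,f_\Scal) + l(f_\Scal,f_\Tcal)$ followed by enlarging each nonnegative integrand back to all of $\Xcal$ recovers $\risk{\Tcal}{g,f_\Scal} + \risk{\Tcal}{f_\Scal}$.

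The main (and modest) subtlety is the ordering of operations in~\eqref{eq:DABoundInfCapacity}: the decomposition $t = \min(s,t) + (t-s)_+$ must precede the second triangle inequality, otherwise one ends up with an additional integral against $(t-s)_+$ and an erroneous coefficient of $2$ multiplying $d_1(\Tcalx,\Scalx)$. Beyond this ordering point, the argument is an elementary chain of manipulations and I do not anticipate further technical difficulty.
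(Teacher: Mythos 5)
Your direct proof is correct and does establish both inequalities, but your stated reason for abandoning the route through \Cref{prop:BoundLocalizedIMD} rests on a misconception. You claim that applying that proposition with $r_1=r_2=1$ "would only yield" the weaker bound $\risk{\Tcal}{g,h_g} + \risk{\Scal}{h_g} + d_1(\Tcalx,\Scalx) + \risk{\Tcal}{f_\Scal}$, but this is not so: the term $\urisk{\Hbb}{g}$ appearing in \Cref{prop:BoundLocalizedIMD} is an infimum over $h\in\Hbb$, and because $\Hbb$ has here infinite capacity, this infimum commutes with the integral, so that at each point $\x$ one may optimize $t(\x)\,l(g,h)+s(\x)\,l(h,f_\Scal)$ over $h$ independently. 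A triangle inequality plus the observation that the pointwise optimum picks $h=h_g$ or $h=f_\Scal$ according to whether $t\leq s$ gives exactly $\urisk{\Hbb}{g}\leq \risk{\Tcal}{g,h_g}+\int\min(s,t)\,l(h_g,f_\Scal)\dd\mu$; the same argument applied to the ideal joint risk yields $\int\min(s,t)\,l(f_\Tcal,f_\Scal)\dd\mu$, and the IMD term evaluates to $\int(t-s)_+\dd\mu = \tfrac12 d_1(\Tcalx,\Scalx)$. This is precisely the paper's proof, and it recovers the full $\min(s,t)$ structure after all.

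That said, the two routes really are different in spirit, and yours has some merit. The paper instantiates its general machinery (source-guided uncertainty plus localized IMD) at the two extremes $r_1=r_2=1$ and $r_1=r_2=0$ and bounds the three resulting terms separately, which illustrates how the abstract bound specializes; your proof is a short, self-contained chain of triangle inequalities combined with the splitting $t=\min(s,t)+(t-s)_+$ for \eqref{eq:DABoundInfCapacity}, and a partition by $\supp\Scalx$ for \eqref{eq:DABoundInfCapacityZero}, which makes the pointwise source of the $\min(s,t)$ factor more transparent but does not showcase the framework the proposition is meant to illustrate. Your remark on operation ordering is slightly off: both orders prove the statement (applying the second triangle inequality first and then splitting $t$ in both integrals yields $2\int(t-s)_+ = d_1$, which matches the stated bound exactly), while your ordering — like the paper's own computation — actually gives the stronger constant $\tfrac12 d_1$.
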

The first result \eqref{eq:DABoundInfCapacity} of \Cref{prop:DABoundTotalVariation} is a refinement of a result from \citet{ben-david_theory_2010}. Apart from the total variation, the remaining terms are smaller due to point 1 of \Cref{prop:sourceGuidedUncertainty} and the $\min(s,t)$ term. The latter captures the unimportance of the covariate shift assumption outside of the overlap of the support of marginals (\ie, one can always extend $f_\Scal$ to be equal to $f_\Tcal$ on $\supp\Tcalx \setminus \supp\Scalx$, and \emph{vice-versa} for $f_\Tcal$). In the extreme case where both supports are disjoint, the $\min(s,t)$ term is null, letting only the total variation distance and the $\risk{\Tcalx}{g,h_g}$ the uncertainty of $g$ in its predictions (which disappears if $g$ is a classifier). Bound \eqref{eq:DABoundInfCapacity} is also linked to \citet{ben-david_hardness_2012}, specifically to their Algorithm $\mathcal A$ that considers an intersection between the source and target domains in terms of boxes of a grid defined over the $d-$dimensional unit cube. Our bound points out to the sufficiency of minimizing the source risk over the intersection of supports of both domains encoded by the $\min(s,t)$ term.
For the second bound \eqref{eq:DABoundInfCapacityZero}, it corresponds to the extreme localization case from \Cref{prop:BoundLocalizedIMD} for $r_1 = r_2 = 0$ or $\rbf_1 = \rbf_2 = 0$. It reduces to the importance-weighted risk on the source domain when the source support contains the target's, and when the labeling functions are the same, \ie in the classic covariate shift setting \citep{shimodaira_covariate_2000,huang2007correcting,sugiyama_covariate_2007,cortes_learning_2010}. This latter result, which only requires the inclusion of the support of the target in the source's, will have analogues that we will present in the next two sections.
\subsection{Hypothesis symmetric differences: revisiting $\HdH-$distances}\label{sec:HdH_case}
The seminal work of \citet{ben-david_theory_2010} proves a bound on the target risk involving the $\HdH$ distance, where $\HdH \coloneqq \{\x \mapsto [h_1(\x) \neq h_2(\x)]; h_1, h_2 \in \Hbb\}$ the space of disagreements between hypotheses in $\Hbb$.
Choosing $\Fbb = \HdH$, corresponding to setting $l$ to the $0-1$ loss, $\Fbb$ is not convex and thus only the upper bound from from \Cref{prop:IMDDuality} holds. The global localization in this case was the topic of \citet{zhang2020localized}. The next corollary addresses its upper bound from \Cref{prop:IMDDuality}.
\begin{restatable}{corollary}{corHdHIMD}
\label{cor:HdHIMD}
    For any $\beta\geq 0, \betabf \geq 0$, we have
    \begin{align}
         1- \IMD_{\HdH}(\Tcalx,(1+\beta)\Scalx) &= \inf _{f \in \HdH}\proba{\Tcalx}{f=0} + (1+\beta)\proba{\Scalx}{f=1}\label{eq:HdHBetaBinaryClassif}\\
         1 - \min_{\betabf^T\ones\leq \beta} \IMD_{\HdH}\left(\Tcalx, \Scalx+\sum_{k=1}^K \beta_k\Scal_{\Xcal|k}\right) &= \max_{\substack{\betabf \geq 0\\\betabf^T\ones\leq \beta}}\inf_{g \in \HdH} \proba{\Tcalx}{f=0} + \sum_{k=1}^K(p_k + \beta_k)\proba{\Scal_{\Xcal|k}}{f=1}\label{eq:HdHBetaBinaryClassifper-class}
    \end{align}
\end{restatable}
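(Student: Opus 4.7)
The plan is to unfold everything from the definition of the IMD and exploit the fact that every $f\in\HdH$ is $\{0,1\}$-valued, so its integral against any probability measure $\Qcal$ equals $\proba{\Qcal}{f=1}$. Once both identities are written as a supremum of a linear functional in $f$, the equalities in \Cref{cor:HdHIMD} follow by pushing the constant $1$ inside (using $1-\sup_{f}A(f)=\inf_{f}(1-A(f))$) and, in the second identity, by swapping the outer minimization with this same trick.

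Concretely, for \eqref{eq:HdHBetaBinaryClassif} I would start from
\begin{equation*}
\IMD_{\HdH}(\Tcalx,(1+\beta)\Scalx)=\sup_{f\in\HdH}\proba{\Tcalx}{f=1}-(1+\beta)\proba{\Scalx}{f=1},
\end{equation*}
and rewrite $1$ minus this quantity as an infimum, using $1-\proba{\Tcalx}{f=1}=\proba{\Tcalx}{f=0}$. For \eqref{eq:HdHBetaBinaryClassifper-class}, I would first decompose $\Scalx=\sum_{k=1}^K p_k\Scal_{\Xcal|k}$ so that
\begin{equation*}
\IMD_{\HdH}\!\left(\Tcalx,\Scalx+\sum_{k=1}^K\beta_k\Scal_{\Xcal|k}\right)=\sup_{f\in\HdH}\proba{\Tcalx}{f=1}-\sum_{k=1}^K(p_k+\beta_k)\proba{\Scal_{\Xcal|k}}{f=1},
\end{equation*}
and then apply the identity $1-\min_{\betabf}\phi(\betabf)=\max_{\betabf}(1-\phi(\betabf))$ over the feasible set $\{\betabf\geq 0,\ \ones^T\betabf\leq\beta\}$. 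The inner $1-\sup_{f}$ again turns into $\inf_{f}$ by the same manipulation.

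There is really no hard step here: the statement is a bookkeeping rewriting of the definition of the IMD specialized to an indicator-valued class. The only mildly delicate points are (i) verifying that $\HdH$ contains the zero function (take $h_1=h_2$), which ensures the sup defining the IMD is nonnegative and that the infimum in the right-hand sides is at most $1$, and (ii) justifying the $\min$/$\max$ swap in the second identity, which is immediate because $\betabf\mapsto\IMD_{\HdH}(\Tcalx,\Scalx+\sum_k\beta_k\Scal_{\Xcal|k})$ enters the expression linearly (through its negation) and the feasible set is compact, so the minimum is attained and taking $1$ minus it equals the maximum of $1$ minus the same expression. No convexity of $\HdH$ or appeal to \Cref{prop:IMDDuality} is needed for this corollary; it is a direct translation of the IMD into an inf/sup optimization problem over classifiers.
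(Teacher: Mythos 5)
Your proof is correct and follows essentially the same route as the paper's: write the IMD over $\HdH$ as a supremum of probability differences (since each $f\in\HdH$ is $\{0,1\}$-valued), push $1$ inside via $1-\sup = \inf(1-\cdot)$ and $1-\proba{\Tcalx}{f=1}=\proba{\Tcalx}{f=0}$, and for the second identity decompose $\Scalx=\sum_k p_k\Scal_{\Xcal|k}$ before swapping $1-\min_{\betabf}$ into $\max_{\betabf}$. The paper's proof is exactly this bookkeeping, so there is nothing further to add.
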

For $\beta=0$, the result of Equation \eqref{eq:HdHBetaBinaryClassif} coincides with the seminal result from \citet{ben-david_theory_2010} that links the $\HdH-$divergence and the binary classiciation problem of distinguishing the source from the target, and resulting in adversarial approaches \citep{ganin_domain-adversarial_2016}. Besides, it provides an additional justification to the deep DA approach proposed in \citet{zhang_bridging_2019}, in which a hyperparameter $\gamma$ multiplies the domain discriminator's risk on the source domain. As for result \eqref{eq:HdHBetaBinaryClassifper-class}, although it is not convex in $h$, it can be approximated in practice using convex surrogate loss functions, thus leading to a convex-concave for which the minimax theorem \citep{sion1958general} holds. Hence, it can be formulated as a binary classification that is robust to the choice of the $\betabf$ vector. A more rigorous link with the theory in this case remains an open direction linked with variational representations of $f-$divegences \citep{keziou2003dual,nguyen2009surrogate,reid2011information}.
\par The next proposition concerns the extreme case of localization with $r_1 = r_2 =0$ (or $\rbf_1 = \rbf_2 = 0$).
\begin{restatable}{proposition}{propHSupport}\label{prop:HSupport}
	Given a hypothesis space $\Hbb$, let $\Hbb\cdot\Hbb \coloneqq \{\x \mapsto [h_1(\x) = h_2(\x)]; h_1, h_2 \in \Hbb\}$, and define
 the $\Hbb\cdot\Hbb-$support of $\Scalx$ as 
	\begin{equation}
		\supp_{\Hbb\cdot\Hbb}\Scalx \coloneqq \bigcap_{\substack{h_1,h_2 \in \Hbb \\\proba{\Scalx}{h_1 = h_2}=1}} \{h_1=h_2\}.
	\end{equation}
	Then
	\begin{equation}\label{eq:boundOnIMDHdHZeroAsHSupport}
		\IMD_{\HdH_0}(\Tcalx,\Scalx) \leq 1-\Tcalx(\supp_{\Hbb\cdot\Hbb}\Scalx).
	\end{equation}
\end{restatable}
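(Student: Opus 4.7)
The plan is to unpack the definition of the localized class $\HdH_0$ and reduce the claimed inequality to a simple set-containment argument involving the $\Hbb\cdot\Hbb$-support. The key observation is that since every element of $\HdH$ is $\{0,1\}$-valued, the localization condition $\esp{\Scalx}{f} \leq 0$ is equivalent to $\proba{\Scalx}{h_1 \neq h_2} = 0$, i.e., $\proba{\Scalx}{h_1 = h_2} = 1$, which is precisely the indexing condition of the intersection defining $\supp_{\Hbb\cdot\Hbb}\Scalx$.

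First, I would fix an arbitrary $f = [h_1 \neq h_2] \in \HdH_0$ and observe that $\int f \, d\Scalx = \proba{\Scalx}{h_1 \neq h_2} = 0$, so
\begin{equation*}
\int f \, d\Tcalx - \int f \, d\Scalx \;=\; \Tcalx(\{h_1 \neq h_2\}) \;=\; 1 - \Tcalx(\{h_1 = h_2\}).
\end{equation*}
Next, I would invoke the definition of the $\Hbb\cdot\Hbb$-support: since the pair $(h_1,h_2)$ satisfies $\proba{\Scalx}{h_1 = h_2}=1$, the set $\{h_1 = h_2\}$ is one of the sets being intersected, so $\supp_{\Hbb\cdot\Hbb}\Scalx \subseteq \{h_1 = h_2\}$. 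Monotonicity of $\Tcalx$ then gives $\Tcalx(\{h_1 = h_2\}) \geq \Tcalx(\supp_{\Hbb\cdot\Hbb}\Scalx)$, from which
\begin{equation*}
\int f \, d\Tcalx - \int f \, d\Scalx \;\leq\; 1 - \Tcalx(\supp_{\Hbb\cdot\Hbb}\Scalx),
\end{equation*}
and taking the supremum over $f \in \HdH_0$ yields \eqref{eq:boundOnIMDHdHZeroAsHSupport}.

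The main obstacle I foresee is not in the inequality itself, which is essentially tautological once the definitions are aligned, but rather a measurability issue: $\supp_{\Hbb\cdot\Hbb}\Scalx$ is in general an uncountable intersection of measurable sets and may fail to be measurable without further assumptions on $\Hbb$. This can be handled either by interpreting $\Tcalx(\supp_{\Hbb\cdot\Hbb}\Scalx)$ as an outer measure, or by restricting the intersection to a countable subfamily under a mild separability assumption on $\Hbb$; in both cases the set-containment argument above remains valid and yields the stated bound.
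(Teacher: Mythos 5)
Your proof is correct and matches the paper's argument: both translate the localization condition $\esp{\Scalx}{f}\leq 0$ into $\proba{\Scalx}{h_1=h_2}=1$, note the containment $\supp_{\Hbb\cdot\Hbb}\Scalx \subseteq \{h_1=h_2\}$ for each admissible pair, and apply monotonicity of $\Tcalx$ before taking the supremum over $\HdH_0$. On the measurability of the intersection, the paper simply asserts that the sets $\{h_1=h_2\}$ are closed (implicitly assuming continuity of hypotheses) so the intersection is closed hence measurable, whereas your outer-measure/separability caveat is, if anything, the more careful remark.
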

The inequality \eqref{eq:boundOnIMDHdHZeroAsHSupport} is a weakening of the result from bound \eqref{eq:DABoundInfCapacityZero}, in the same way that the $\HdH-$divergence generalizes the bound relying on the total variation by restricting the family of sets defining the supremum \citep{ben-david_theory_2010}. Indeed, while the support of $\Scalx$ is the intersection of all the closed sets with $\Scalx-$probability equal to $1$, the $\Hbb\cdot\Hbb-$support restricts the family of sets defining the intersection. For example, if the preimages of $1$ by hypotheses of $\Hbb\cdot\Hbb$ are convex sets, then $\supp_{\HdH}(\Scalx)$ is necessarily convex, while $\supp\Scalx$ does not need to be convex. Whether such a restriction improves the estimation of the $\Hbb\cdot\Hbb-$support over that of the classic support is a possible future research direction.
The previous analysis provides more justification for the DA method introduced in \citet{saito_maximum_2017}. Indeed, their theoretical justification involves approximating the $\HdH-$divergence as $\sup_{h_1,h_2}{h_1\neq h_2}$ when $h_1$ and $h_2$ are restricted to have correct predictions on $\Scal$, without examining the implications of this restriction on the rest of the $\HdH-$based DA bound from \citet{ben-david_theory_2010}. Our results from Proposition \ref{prop:BoundLocalizedIMD} and \ref{prop:BoundLocalizedIMDBetaPerClass} overcome this limitation. Moreover, by \Cref{prop:HSupport}, we show that their algorithm moves the target data to the structured support of the source, not the classic one.\\
\subsection{Lipschitz functions: revisiting Optimal Transport for Domain Adaptation}\label{sec:ot_case}
\par One of the most common assumptions about labeling functions is their Lipschitzness, as it represents an inductive bias allowing to propagate the label of an instance to its neighbors. Lipschitzness of labeling functions and large margin separation between classes representing high density clusters are two equivalent notions as noted in \citep{ben-david_domain_2014}. Relying on the Lipschitzness for the considered hypothesis space, several works in the literature establish bounds on the target risk involving the Wasserstein distance \citep{redko_theoretical_2017,courty_joint_2017,shen_wasserstein_2018,rakotomamonjy2021optimal,le2021lamda,kirchmeyer2022mapping}.
We now examine the implications of localization on this quantity. The case of global localization, \ie $\IMD_\Fbb(\Tcalx, (1+\beta)\Scalx)$ is exactly the partial optimal transport problem \citep{benamou2003numerical,caffarelli2010free,figalli2010optimal} introduced in \citet{wu2019domain}. As for the per-class localization, we provide its analogue in the following Proposition.
\begin{restatable}[Optimal transport with per-class localization]{proposition}{propOptimalTransportLocalized}\label{prop:optimalTransportLocalized}
    Assume $(\Xcal,d)$ and $(\Ycal, l)$ are metric spaces, and that the hypotheses in $\Hbb$ are $\frac{1}{2}-$Lipschitz. Then $l(\Hbb,\Hbb) \subseteq \Fbb$ where $\Fbb$ is the space of nonnegative $1-$Lipschitz functions over $\Xcal$. And for any $\betabf \geq 0$, the value of $\IMD_{\Fbb}\left(\Tcalx, \Scalx + \sum_{k=1}^K\beta_k\Scal_{\Xcal|k}\right)$ can be computed by solving the following transport problem
    \begin{equation}\label{pb:OTper-classLocalization}
        \begin{aligned}
            \inf_{\{\Pcal_k\}_{k=1}^K \subset \mathcal M_+(\Xcal,\Xcal)}&\quad \sum_{k=1}^K\esp{(\x_t, \x_s) \sim \Pcal_k}{d(\x_t, \x_s)}\\
            \text{s. t.}\quad& \pi_1 \# \sum_{k=1}^K \Pcal_k \geq \Tcalx\\
            &\pi_2 \# \Pcal_k \leq (p_k + \beta_k)\Scal_{\Xcal|k}\\
        \end{aligned}
    \end{equation}
	where $\pi_1:(\u,\v)\mapsto \u, \pi_2:(\u,\v)\mapsto \v$, and the first inequality constraint on $\Tcalx$ can be replaced by an equality.
\end{restatable}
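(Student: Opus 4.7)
The plan is to handle the two assertions separately: first the inclusion $l(\Hbb,\Hbb) \subseteq \Fbb$, and then the equality between the IMD and the transport problem \eqref{pb:OTper-classLocalization} via an unbalanced Kantorovich--Rubinstein duality.

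The inclusion is a short computation. For any $h_1, h_2 \in \Hbb$, the map $\x \mapsto l(h_1(\x), h_2(\x))$ is nonnegative since $l$ is a metric on $\Ycal$, and the reverse triangle inequality combined with the $\frac{1}{2}$-Lipschitzness of $h_1, h_2$ yields
\[|l(h_1(\x), h_2(\x)) - l(h_1(\x'), h_2(\x'))| \leq l(h_1(\x), h_1(\x')) + l(h_2(\x), h_2(\x')) \leq d(\x,\x'),\]
so $l(h_1,h_2)$ is 1-Lipschitz and lies in $\Fbb$.

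For the duality, I first rewrite $\Scalx + \sum_k \beta_k \Scal_{\Xcal|k} = \sum_k (p_k+\beta_k) \Scal_{\Xcal|k}$, then view \eqref{pb:OTper-classLocalization} as a linear program over nonnegative measures and form its Lagrangian using nonnegative dual variables $f : \Xcal \to \Rset_+$ for the first-marginal constraint and $g_k : \Xcal \to \Rset_+$ for each second-marginal constraint. Minimizing the Lagrangian over $\Pcal_k \geq 0$ forces the pointwise inequality $f(\x_t) \leq d(\x_t,\x_s) + g_k(\x_s)$ for every $k$ (otherwise the infimum is $-\infty$), after which the primal value equals
\[\sup_{f,\, g_k \geq 0} \int f\, d\Tcalx - \sum_{k=1}^K (p_k+\beta_k) \int g_k\, d\Scal_{\Xcal|k}.\]
The reduction to the IMD is then purely functional: for a fixed $f$, the smallest $g_k$ compatible with the pointwise constraint is $g_k = f^c$, where $f^c(\x_s) := \sup_{\x_t}[f(\x_t) - d(\x_t,\x_s)]$. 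This $f^c$ is 1-Lipschitz as a supremum of 1-Lipschitz maps in $\x_s$, and it dominates $f$ (take $\x_t = \x_s$), so $f^c \in \Fbb$. Since replacing $f$ by $f^c$ only increases $\int f\, d\Tcalx$, one may assume $f = g_1 = \cdots = g_K \in \Fbb$, at which point the dual objective becomes exactly $\IMD_\Fbb(\Tcalx, \Scalx + \sum_k \beta_k \Scal_{\Xcal|k})$.

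The main technical step will be justifying strong duality. Compactness of $\Xcal$, finiteness of all involved measures, and boundedness of the relevant slice of $\Fbb$ (nonnegative 1-Lipschitz functions, which one may WLOG assume attain the value $0$ when $\sum_k \beta_k > 0$, since adding constants only reduces the objective in that case) place the problem within the scope of standard unbalanced Kantorovich--Rubinstein dualities \citep{chizat2018unbalanced,liero_optimal_2018}. Finally, the claim that the first-marginal inequality can be replaced by an equality follows because any feasible family $\{\Pcal_k\}$ with $\sum_k \pi_1 \# \Pcal_k = \mu \geq \Tcalx$ can be rescaled in the first coordinate by the measurable factor $\frac{d\Tcalx}{d\mu}(\x_t) \in [0,1]$, producing new plans that saturate the first marginal, only shrink the second marginals (hence remain feasible), and do not increase the transport cost.
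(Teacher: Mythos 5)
Your proof is correct, and it takes a genuinely different route from the paper's, even though both rest on Lagrangian duality. The inclusion $l(\Hbb,\Hbb) \subseteq \Fbb$ is proved identically in both (reverse triangle inequality plus $\tfrac{1}{2}$-Lipschitzness of each $h_i$). Where you diverge is in the direction of the duality argument. The paper starts from the IMD, written as a constrained supremum over $f$ with the $1$-Lipschitz constraint $f(\u) - f(\v) \leq d(\u,\v)$ and the nonnegativity constraint $f \geq 0$ made explicit, and applies Luenberger's strong duality theorem to that functional optimization: the Lagrange multipliers are exactly the transport plans $\Pcal_k \in \mathcal M_+(\Xcal^2)$ (for the Lipschitz constraints) and the slack measures $\Ucal, \Vcal_k \in \mathcal M_+(\Xcal)$ (for nonnegativity), and the transport problem \eqref{pb:OTper-classLocalization} falls out as the dual. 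You go the other way: take the transport LP as primal, dualize to get the Kantorovich-type dual with one multiplier $f$ for the target marginal and multipliers $g_k$ for each source conditional, and then collapse the dual to the IMD form by the $c$-transform observation $g_k = f^c$ together with the fact that $f^c$ is itself nonnegative $1$-Lipschitz and dominates $f$. That $c$-transform step is a genuinely different (and arguably cleaner) way to recover the single-variable IMD objective than the paper's route of hard-coding the Lipschitz constraint into $\Fbb$ from the outset. Finally, your argument for replacing the first-marginal inequality by an equality is a primal rescaling of the plans by the Radon--Nikodym density $\tfrac{d\Tcalx}{d\mu}$, while the paper argues on the dual side that the nonnegativity of $f$ need only be imposed on $\supp\Scalx$, which eliminates the slack measure $\Ucal$; both are valid, and yours is perhaps more transparent. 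The one soft spot is the strong-duality justification: appealing to the unbalanced Kantorovich--Rubinstein literature is reasonable, but if you wanted a self-contained argument you would need to verify a Slater-type interior-point condition for the transport LP (e.g., exhibit a feasible $\{\Pcal_k\}$ with strict marginal inequalities), analogous to the role the positive constant function plays as a strictly feasible point in the paper's application of Luenberger.
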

The per-class formulation provides a new partial optimal transport problem in which the maximum mass received by the source depends on the different class-conditionals.
\par In the case $\epsilon = 0$, we can show that $\IMD_{\Fbb_0}(\Tcalx,\Scalx) = \esp{\Tcalx}{d(\x, \supp \Scalx)}$, hence providing a first theoretical justification for the use of the \emph{Symmetric Support Divergence (SSD)} \citep{tong2022adversarial} in domain adaptation via incorporating it in a bound on the target risk. Indeed, the SSD is an upper bound on $\esp{\Tcalx}{d(\x, \supp \Scalx)}$.
\section{Some empirical illustrations}
\label{sec:experiments}
\subsection{Experiment description}
\paragraph{Toy dataset generation:} For the source domain, we generate $n_s = 300$ points from a mixture of 2-dimensional $K$ Gaussians, \ie $\Scalx = \sum_{k=1}^K p_k \mathcal{N}(\mubf_k, \sigma\I_2)$ and each component corresponds to a class. We obtain the centers $\mubf_k$ by rotating the vector $(0,1)$ by an angle $\frac{2k\pi}{K}$. The label marginal distribution is $p_k \propto e^{\eta k}$, where $\eta>0$ captures the intensity of the imbalance. We generate the target domain by rotating the source by an angle $\theta$ around the origin. Its label proportions follow from sorting those of the source domain in the descending order, to accentuate the shift between $\Scaly$ and $\Tcaly$. Hence, $\theta = 0$ or $\theta>0$ respectively correspond to the label shift- and the generalized label shift cases.

\paragraph{Comparing per-class vs global relaxation:} We consider the special case of \Cref{sec:ot_case}. For a given $\beta>0$, we compute the corresponding transport plan for global \citep{wu2019domain} and per-class (minimization problem in Corollary \ref{cor:BoundLocalizedIMDBetaPerClassSplitting}), where computing the IMD is done by solving Problem \eqref{pb:OTper-classLocalization}. To assess whether the obtained transport plan respects the class information, we compute the accuracy of a labeling that results from the propagation of the source labels as proposed in \citet{redko2019optimal}. More precisely, given a transport plan $\P \in \Rset^{n_t\times n_s}$ and the source labels matrix $\Ybf_s \in \Rset^{n_s \times K}$, we estimate the label of target point $\x_t$ as $\hat y_t = \argmax(\P\Ybf_s)$: the label of a target point is a majority vote of the labels of the source instances to which its mass is transported. After computing the difference in accuracy between per-class and global relaxation over 50 draws of source and target data, we report the median, the maximum, and the minimum of these draws. When not mentioned on the figures, different parameters are fixed as $K= 3$, $\beta=0.5$, and $\eta=1$.
\subsection{Results}
\subsubsection*{Label shift case}
For $\theta = 0\degree$, \Cref{fig:illustrationLabelShift} illustrates \Cref{prop:TargetShift}. Recall that the proposition guarantees the existence of some $\beta \in [0,1]$ that makes the IMD null for the per-class relaxation, whereas it is not the case for the global case \citep{wu2019domain}. In fact, solving the per-class relaxed optimal transport problem results in a transportation plan without cross-labeling, whereas this latter issue persists with global localization. \Cref{fig:accuracyVsBetaLabelShift} also shows this advantage by examining the difference in accuracy between the per-class and global localization. In particular, it shows that the difference grows larger as the number of classes or the imbalance intensity $\eta$ grows for the same sample size.

\begin{figure}
	\begin{subfigure}{0.33\linewidth}
		\includegraphics[width=\linewidth]{./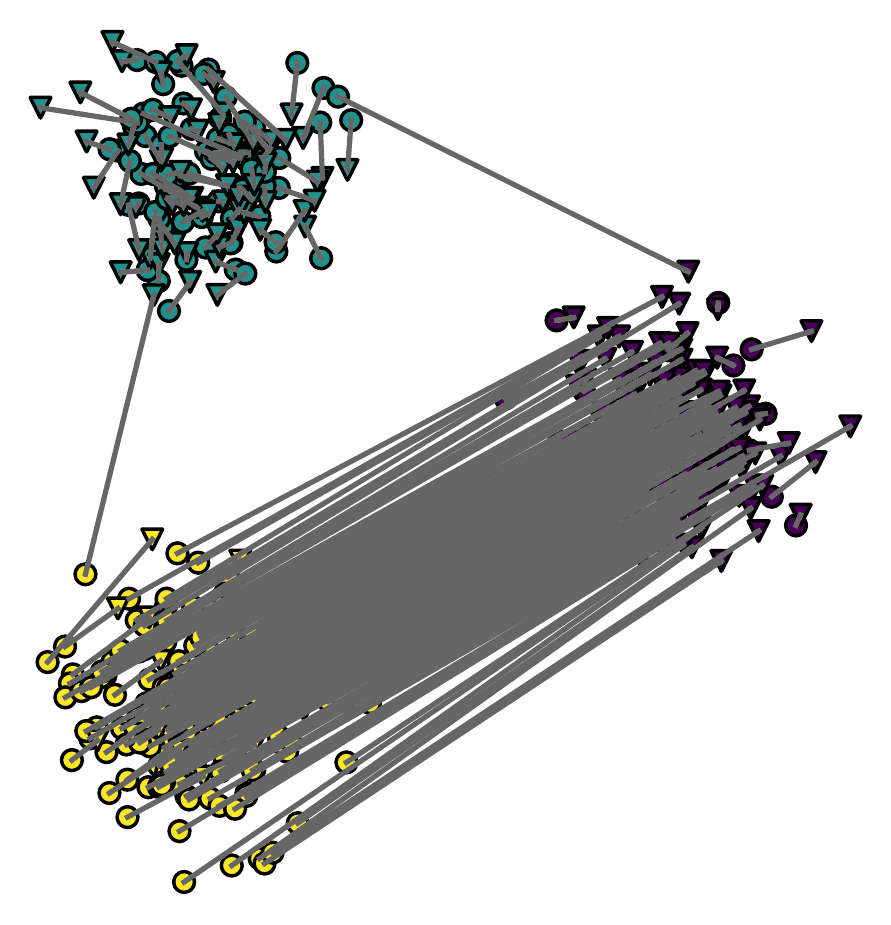}
		\caption{$\beta = 0$ (classic optimal transport)}
	\end{subfigure}
	\begin{subfigure}{0.33\linewidth}
		\includegraphics[width=\linewidth]{./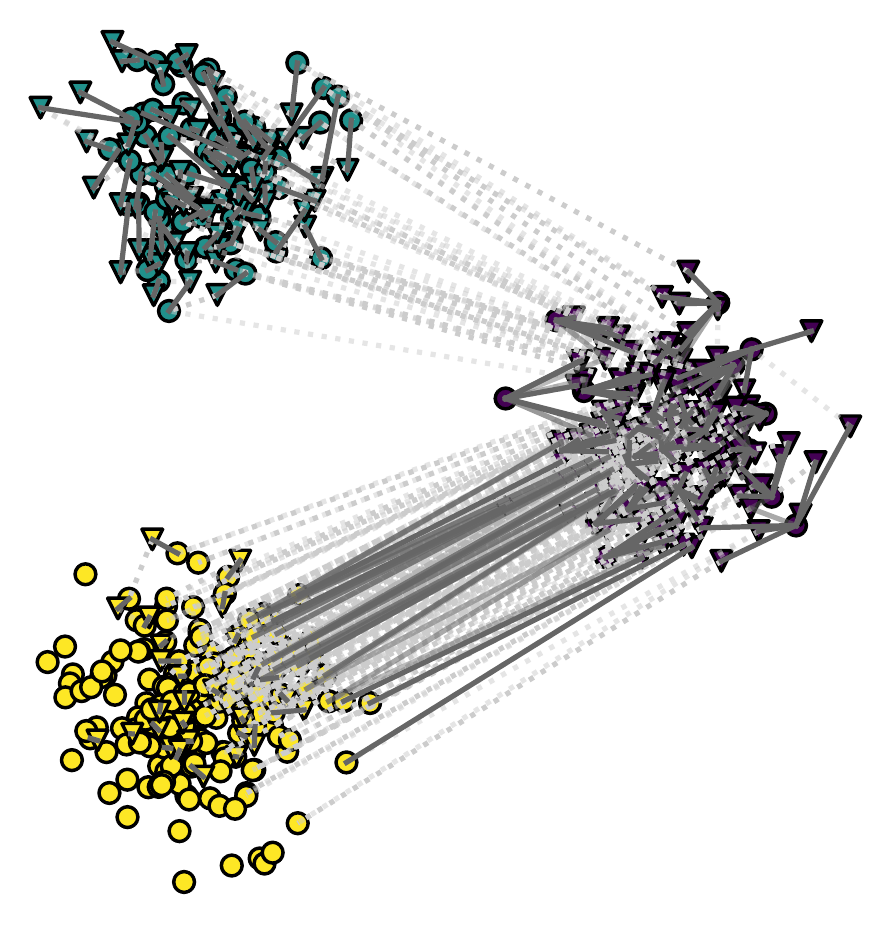}
		\caption{$\beta = 0.5$}
	\end{subfigure}
	\begin{subfigure}{0.33\linewidth}
		\includegraphics[width=\linewidth]{./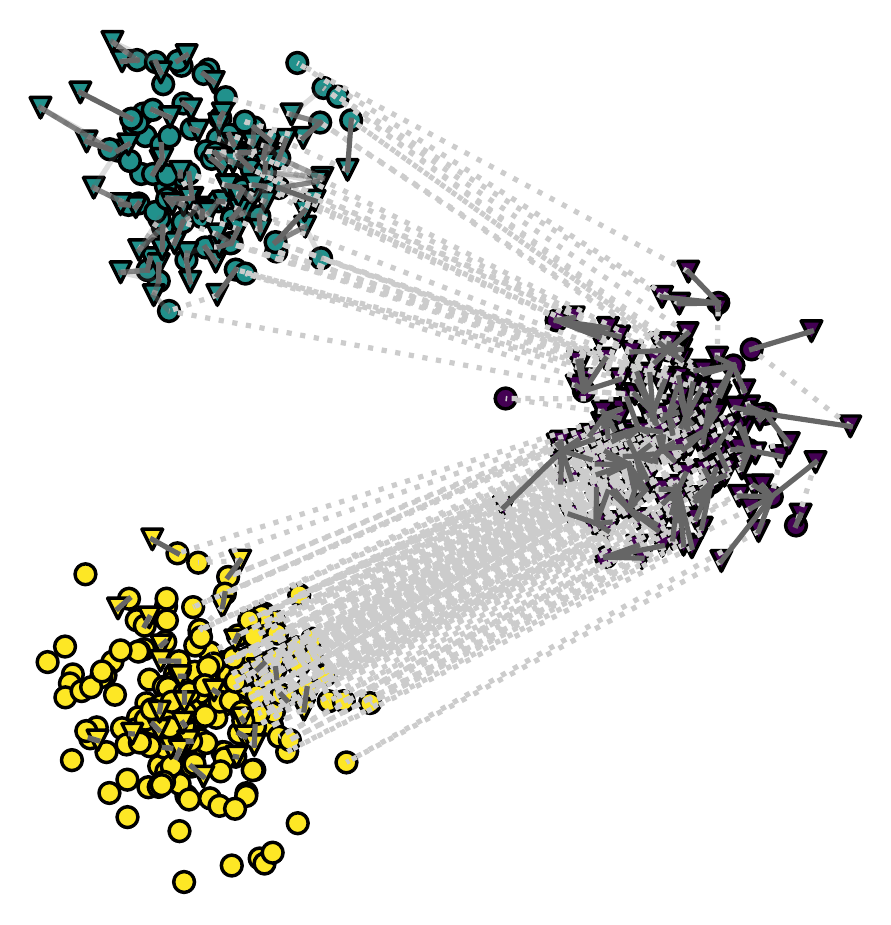}
		\caption{$\beta = 1$}
	\end{subfigure}
	\caption{Illustration of the transport plan obtained for $\theta= 0\degree$. Circles and triangles correspond to $\Scalx$ and $\Tcalx$ respectively. The transport plan is illustrated with continuous (resp. dotted) lines for per-class (resp. global) relaxation.}
	\label{fig:illustrationLabelShift}
\end{figure}

\begin{figure}[!ht]
	\centering
	\begin{subfigure}{0.45\linewidth}
		\includegraphics[width=\linewidth]{./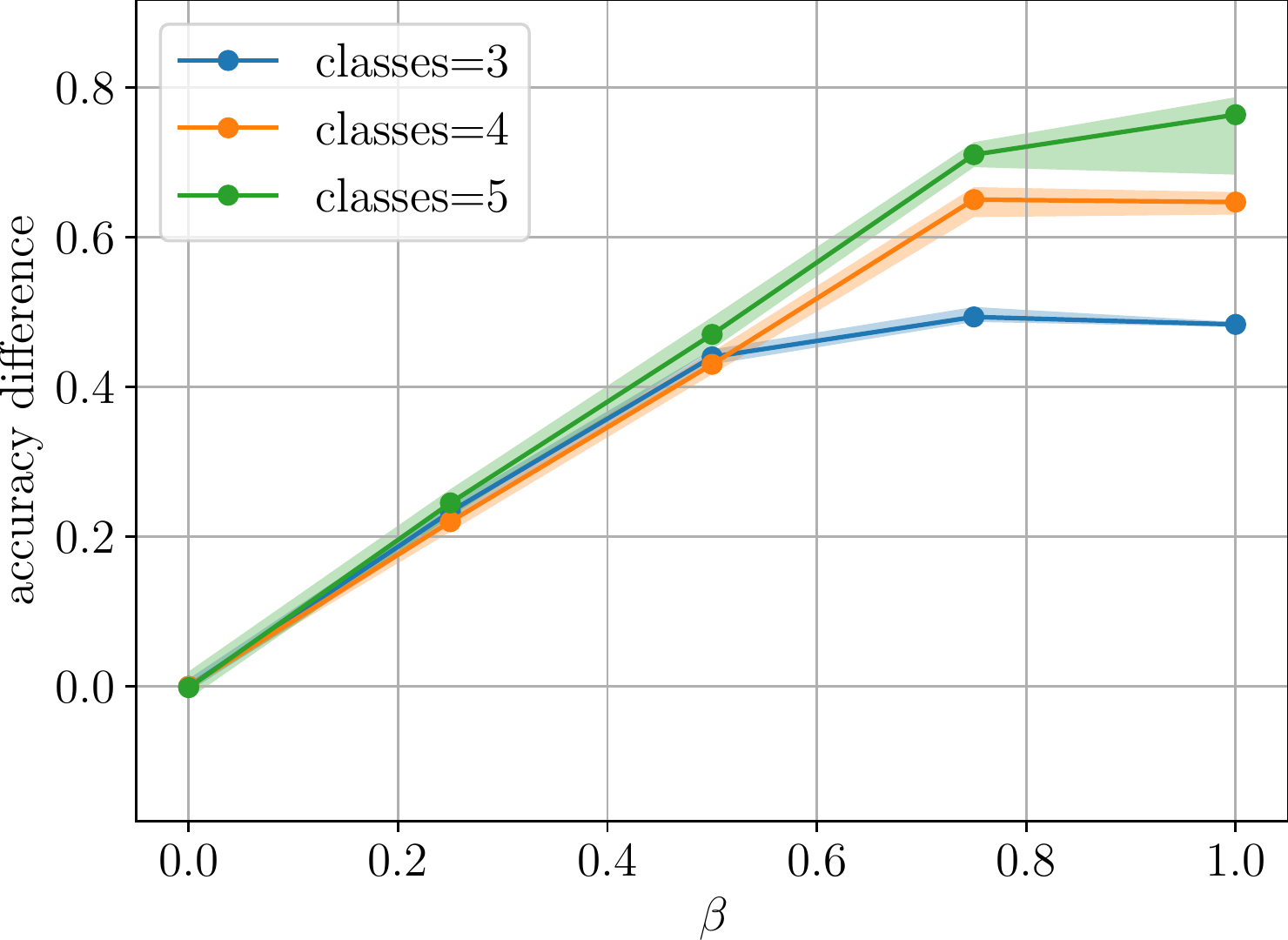}
		\caption{for different values of $K$}
	\end{subfigure}
	\begin{subfigure}{0.45\linewidth}
		\includegraphics[width=\linewidth]{./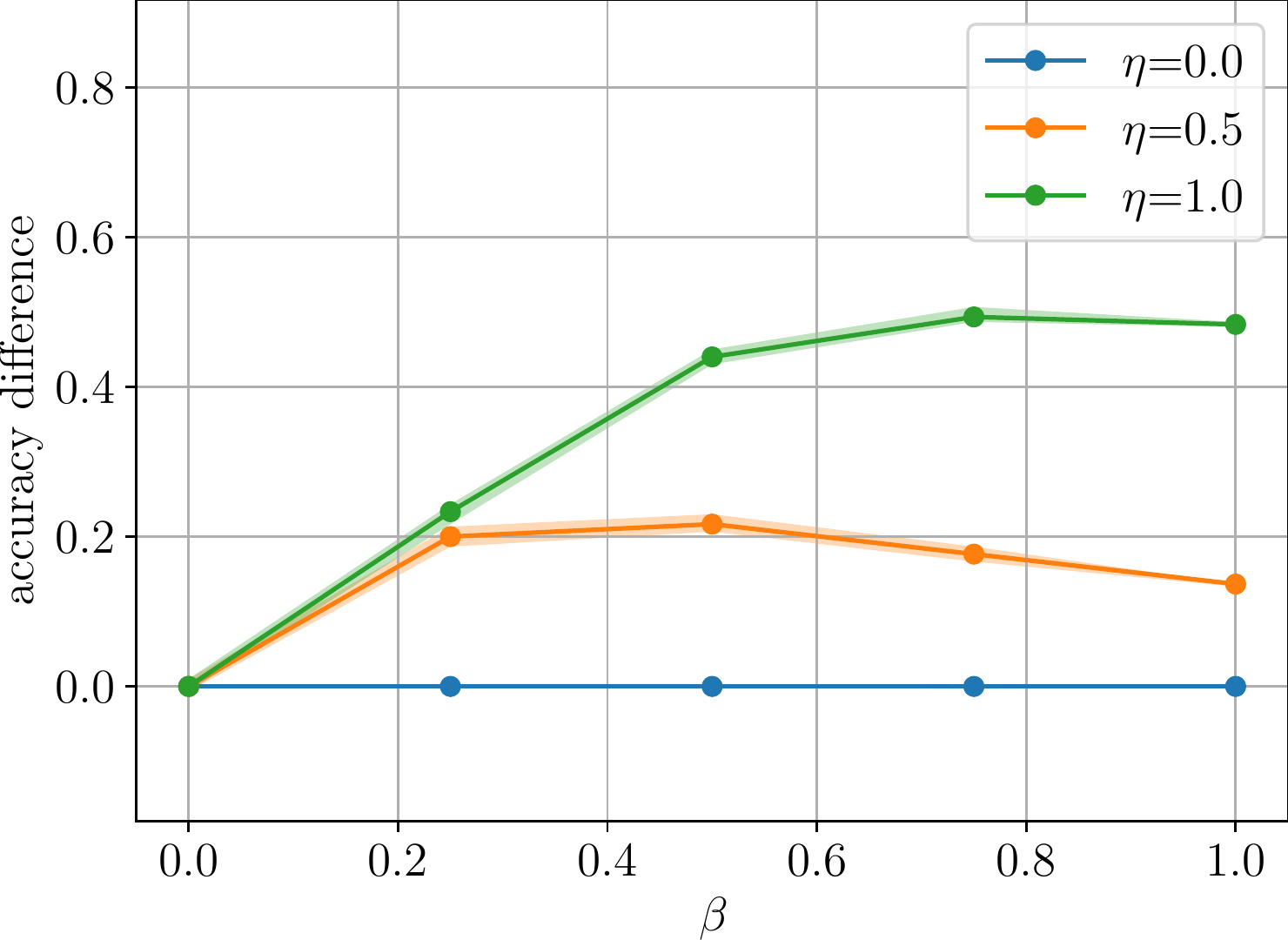}
		\caption{for different imbalance values}
	\end{subfigure}
	\caption{Difference of accuracy due to label propagation as a function of $\beta$.}
	\label{fig:accuracyVsBetaLabelShift}
\end{figure}
\subsubsection*{Generalized label shift case}
For $\theta = 30\degree$, we carry out the same experiments as in the label shift case. We do not have a theoretical guarantee on the value of $\beta$ for the per-class relaxation, but we notice a similar phenomenon when $\beta$ is allowed to grow greater than $1$, as illustrated in \Cref{fig:illustrationAngle30}: less connections between instances of different label are present as $\beta$ increases. However, on \Cref{fig:accuracyVsBetaAngle30}, we no longer have the same trend when varying the class numbers, whereas a similar trend is observed for different imbalance intensities $\eta$. The change in trend observed in \Cref{fig:accuracy_diff_vs_beta_for_n_classes_30} can be explained as follows: When the number of classes grows, instances of different classes get closer to each other. As a result, the difficulty of the adaptation outweighs the benefit of per-class localization.
\begin{figure}[!ht] 
	\begin{subfigure}{0.32\linewidth}
		\includegraphics[width=\linewidth]{./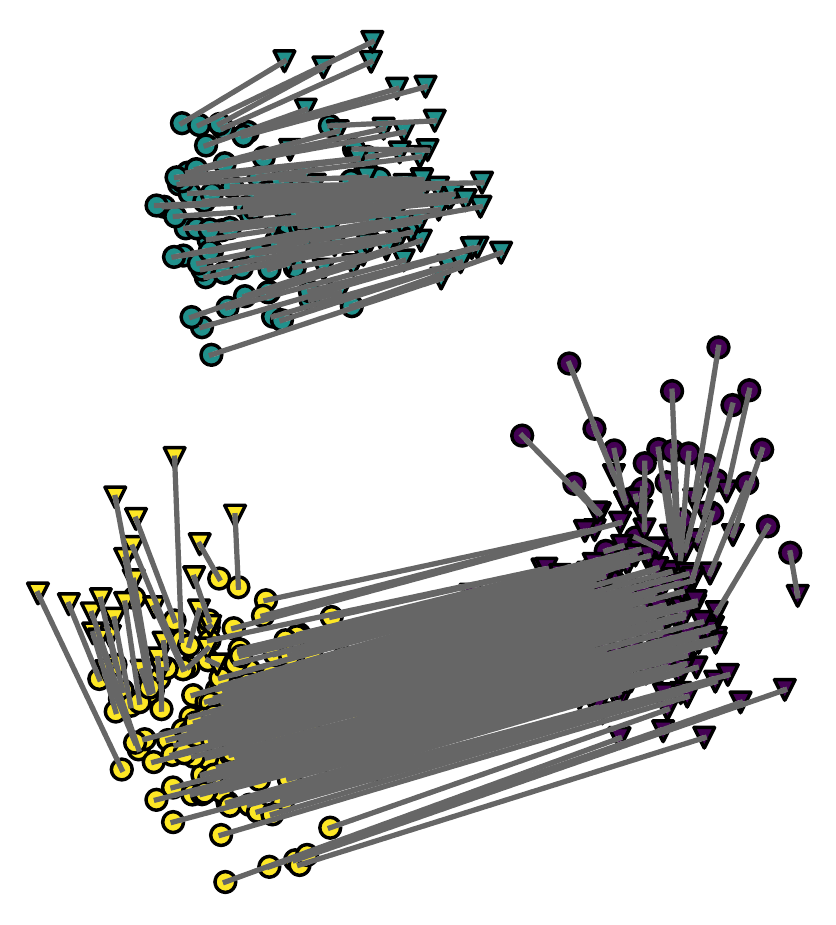}
		\caption{$\beta = 0$ (classic optimal transport)}
	\end{subfigure}
	\begin{subfigure}{0.32\linewidth}
		\includegraphics[width=\linewidth]{./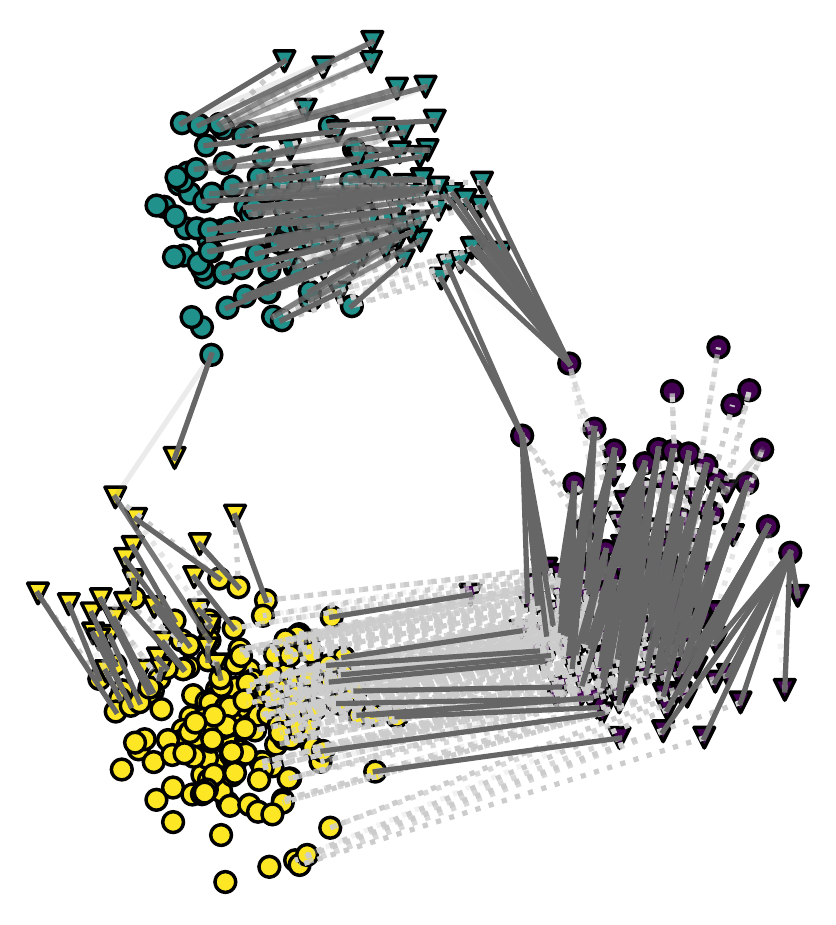}
		\caption{$\beta = 0.75$}
	\end{subfigure}
	\begin{subfigure}{0.32\linewidth}
		\includegraphics[width=\linewidth]{./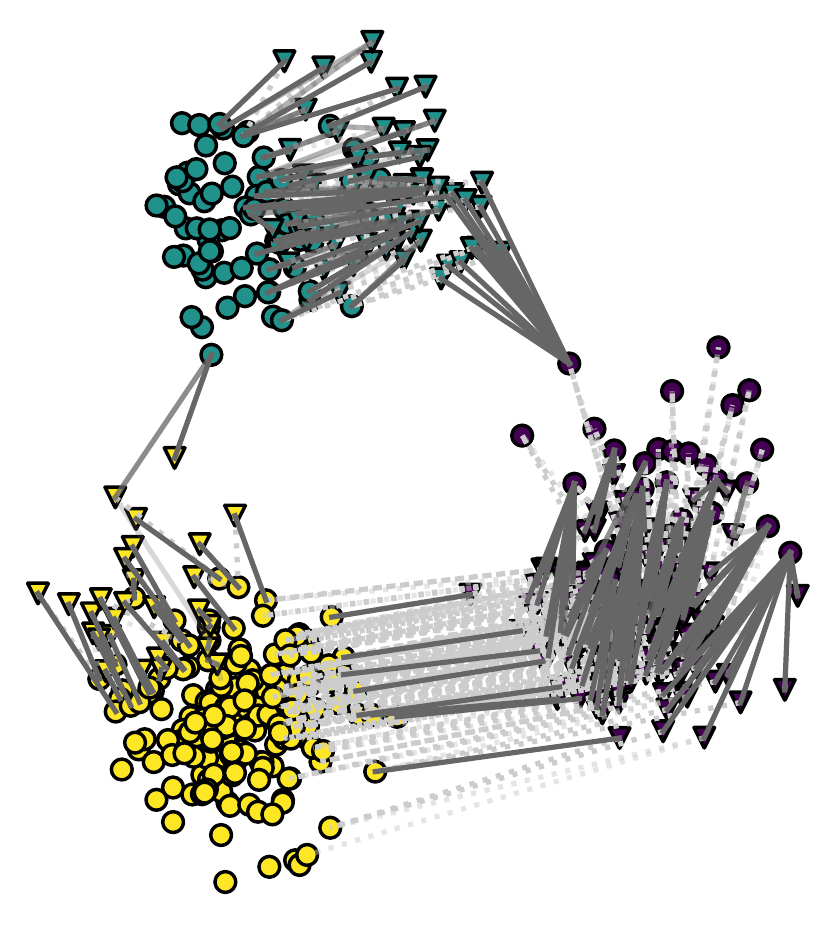}
		\caption{$\beta = 1.5$}
	\end{subfigure}
	\caption{Illustration of the transport plan obtained by solving the problem with global vs per-class relaxation. Circles and triangles correspond to the source and target domains respectively.}
	\label{fig:illustrationAngle30}
\end{figure}

\begin{figure}[!ht]
	\centering
	\begin{subfigure}{0.45\linewidth}
		\includegraphics[width=\linewidth]{./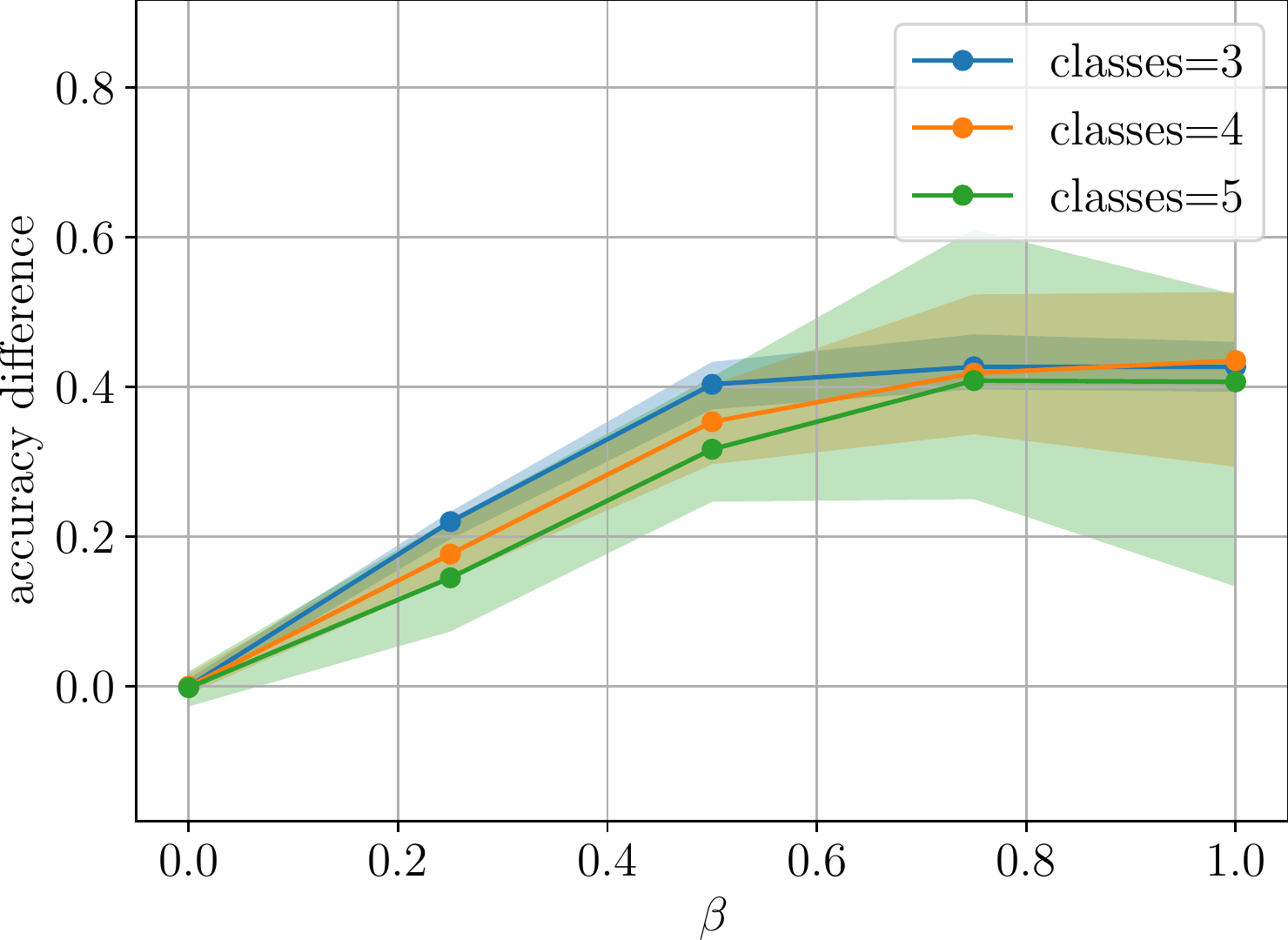}
		\caption{for different values of $K$}
		\label{fig:accuracy_diff_vs_beta_for_n_classes_30}
	\end{subfigure}
	\begin{subfigure}{0.45\linewidth}
		\includegraphics[width=\linewidth]{./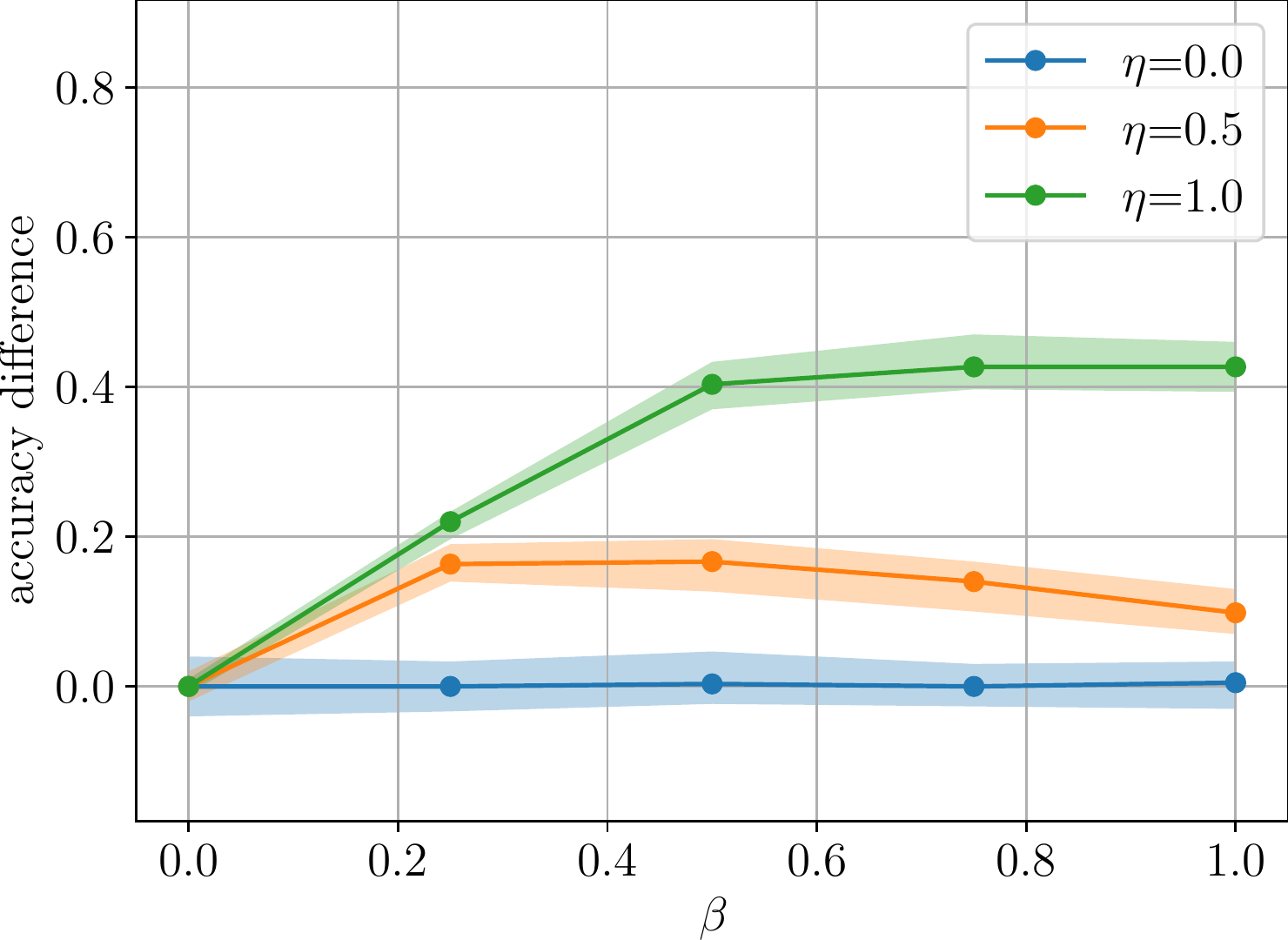}
		\caption{for different imbalance values}
	\end{subfigure}
	\caption{Difference in accuracy due to label propagation as a function of $\beta$.}
	\label{fig:accuracyVsBetaAngle30}
\end{figure}

\section{Related Work}
\paragraph{Confident predictions} The idea of promoting the confidence of a classifier in its predictions appears in the semi-supervised learning literature, where the goal is to encourage the cluster assumption. It implies that no high density region is crossed by decision boundaries \citep{grandvalet2005semi}. Some early domain adaptation approaches relied purely on this assumption without aligning the domains \citep{bruzzone_domain_2010}. Newer methods encourage it via conditional entropy minimization along with domain alignment and source performance optimization \citep{shu_dirt-t_2018,kirchmeyer2022mapping,tong2022adversarial}, or rely on it to define pseudo-labels when the considered approach requires some supervision on the target domain \citep{kang_contrastive_2019}.  In \citet{ben-david_hardness_2012,ben-david_domain_2014}, the authors studied the cluster assumption for its impact on the sample complexity of some DA algorithms, where it concerns the labeling function, not the learned classifier as in our case. More recently, the contributions of \citet{germain_new_2016,morerio_correlation_2017} addressed this problem theoretically. The former leverages the PAC-Bayesian theory \citep{mcallester1999some,catoni2007pac} to prove a theoretical bound that is a combination of the source risk and an unsupervised target risk in a binary classification setting. The latter shows a relation between the alignment of covariance matrices \citep{sun_return_2016,sun_deep_2016} and conditional entropy minimization on the target domain. Our work, however, provides justification in a more general multi-class setting, and is an improvement over a general class of bounds.
\paragraph{Asymmetry in the relation between domains}
Relaxing the requirement of equality of the distribution marginals is not new and has been considered in the definition of the $\HdH-$distance \citep{ben-david_theory_2010} and the $l-$discrepancy \citep{mansour_domain_2009}. In both cases, the restriction of the supremum over the considered hypothesis space makes the divergence null even when distributions are not equal. In contrast, the interest in asymmetric divergences is more recent \citep{zhang_bridging_2019,wu2019domain,zhang2020localized,kpotufe2021marginal}. In particular, \citet{zhang2020localized,hanneke2019value,kpotufe2021marginal} advocated the interest of asymmetry to capture the easiness of adaptation depending on its direction, and showed the benefit of asymmetry in reducing the sample complexity at the source level. Although we did not study sample complexity, our IMD notion can be considered complementary to these approaches. In fact, it relates to several previously considered IPM's that led to the implementations of DA algorithms.

\section{Conclusion and future perspectives}
In this work, we provided several refinements to domain adaptation theory on two main aspects. On the one hand, we highlighted the role of the certainty of the considered scoring function in its predictions on the target domain while being guided by the source labels. Our result in this regard spans over a large class of domain adaptation theoretical bounds. On the other hand, we connected two families of relaxations of divergences between probabilities and we extended them by utilizing the source domain's categorical information.
\par The future perspectives of this work are many. Indeed, whether a prior on target marginal class distribution will lead to other forms of divergence relaxation is an open direction. Also, other specializations of our divergence term such as considering universal kernels are to consider. And apart from IPM's, variational representations of $f-$divergences would link our analysis to adversarial deep learning methods. Finally, providing generalization rates in the same way as in \citet{zhang2020localized} would be informative about the role of localization depending on the choice of $\Fbb$ and whether it is performed per class.
\section{Acknowledgements} We thank Ievgen Redko for the fruitful discussions and the valuable feedback.

\newpage
\bibliography{./bib/DA,./bib/probas,./bib/other}
\newpage
\appendix
\section{Proofs for the different claims}
In this appendix, we provide proofs for the different claims made in the manuscript. We denote the indicator function of a set $A$ using the Iverson Bracket, \ie, $[\x \in A] = 1$ if $\x \in A$ and $0$ otherwise.
\explCrossEntropyLoss*
\begin{proof} 
Given $g$ as a scoring function and a point $\x \in \Xcal$, we have
\begin{align*}
    l(g(\x), h_g(\x)) &= \inf_{\y \in \Ycal}-\sum_{i=1}^K y_i(\log g(\x))_i\\
    &= - \max_i\log(g(\x)_i) = H_\infty(g(\x)) < H_\alpha(g(\x)) \quad\forall \alpha\geq 1.
    \end{align*}
\end{proof}
\explHingeLoss*
\begin{proof}
For a scoring function $g$ and a point $\x \in \Xcal$ we have
$$(1-\sgn{g(x)}g(\x))_+ = (1-\abs{g(x)})_+ \leq (1-yg(\x) \forall y \in \{-1,1\}$$,
where we used the the fact that the hinge loss is nonincreasing. The computation can be generalized to any loss function that is non-increasing and applied to $yg(\x)$.
\end{proof}
\propSourceGuidedUncertainty*
\begin{proof}
~\paragraph{Point 1} follows from bounding the infimum over $h \in \Hbb$ by the value for $h = h_g$, and that if $g$ is a classifier then $g = h_g$.
~\paragraph{Point 2} 
\begin{align*}
    \inf_{h \in \Hbb}\urisk[1,2]{\Hbb'}{h} 
    &= \inf_{h \in \Hbb} \inf_{h' \in \Hbb'}(\risk[1]{\Tcal}{h,h'}) + \risk[2]{\Scal}{h'}\\
    &= \inf_{h' \in \Hbb'} \inf_{h \in \Hbb}(\risk[1]{\Tcal}{h,h'}) + \risk[2]{\Scal}{h'}\\ 
    &= \inf_{h' \in \Hbb'}\risk[2]{\Scal}{h'}\quad{\text{using }}\Hbb \subseteq \tilde\Hbb
\end{align*}
where the last equality is obtained by setting $h$ to $h'$, which is possible since $\Hbb' \subseteq \Hbb$.
~\paragraph{Point 3} We have
\begin{align*}
        \risk{\Tcal}{h^*} + \risk{\Scal}{h^*}&\geq \risk{\Tcal}{h^*} + \urisk{\Hbb}{h^*}\quad\text{ (from point 1) }\\
        &= \inf_{h \in \Hbb} \risk{\Tcal}{h^*} + \risk{\Tcal}{h^*,h} + \risk{\Scal}{h}\\
        &\geq \inf_{h \in \Hbb} \risk{\Tcal}{h'} + \risk{\Scal}{h'} \quad\text{(triangle inequality)}\\
        &= \risk{\Tcal}{h^*} + \risk{\Scal}{h^*}.
    \end{align*}
    Hence $\risk{\Scal}{h^*} = \urisk{\Hbb}{h^*}$.
    For the second equality, we have
    \begin{align*}
        \risk{\Scal}{h_\Scal} &= \inf_{h \in \Hbb}\risk{\Scal}{h}\quad\text{(by definition of } h_\Scal)\\
        &= \inf_{h \in \Hbb}\urisk{\Hbb}{h}\quad\text{(by point 2)}\\
        &\leq \urisk{\Hbb}{h_\Scal}\\ 
        &\leq \risk{\Scal}{h_\Scal}\quad\text{(by point 1)}.
    \end{align*}
\end{proof}
\propSourceGuidedUncertaintyBound*
\begin{proof}
    Let $g \in \Ycal^\Xcal$. We have
    \begin{align*}
        \risk[1]{\Tcal}{g} - \sup_{h \in \Hbb}\left(\risk[2]{\Tcal}{h} - \risk[2]{\Scal}{h}\right)
        &=\risk[1]{\Tcal}{g} + \inf_{h \in \Hbb}\left(-\risk[2]{\Tcal}{h} + \risk[2]{\Scal}{h}\right)\\
        &= \inf_{h \in \Hbb} \risk[1]{\Tcal}{g} - \risk[2]{\Tcal}{h} + \risk[2]{\Scal}{h}\\
        &\leq \inf_{h \in \Hbb} \risk[1]{\Tcal}{g,h} + \risk[2]{\Scal}{h}\quad\text{(by assumption on loss functions)}
    \end{align*}
    On the other hand, by assumption, we have $\sup_{h \in \Hbb} \risk[2]{\Tcal}{h} - \risk[2]{\Scal}{h} \leq A(\Tcal,\Scal)$. Summing the last two inequalities gives the result.
\end{proof}

\paragraph{Proof for choices of loss functions in Proposition \ref{prop:sourceGuidedUncertaintyBound}}
\paragraph{Case of the cross entropy loss}
Let $l^1$ be chosen as the cross-entropy loss. For $\y_1,\y_2$ we have two basis vectors of $\Rset^K$:
$$l^1(g(\x),\y_1) - l^1(g(\x),\y_2) = (\y_2 - \y_1)^T\log(g(\x)) \leq \norm{\y_2 - \y_1}_1\norm{\log(g(\x))}_\infty$$
Assume $g_i(\x) = \frac{e^{a_i(\x)}}{\sum_{k=1}^Ke^{a_k(\x)}}$, where $g_i$ denotes the $i-$th component of $g$. Then we have
\begin{align}
\norm{\log(g(\x))}_\infty &= \max_{1\leq k\leq K} \abs{\log(g(\x))}\nonumber\\
&= \max_{1\leq k\leq K} -\log(g(\x))\quad(\text{since }0\leq g(\x)\leq 1)\nonumber\\
&= \max_{1\leq k\leq K} \log\left(\sum_{i = 1}^K e^{a_i(\x)}\right) - a_k(\x)\nonumber\\
&\leq \max_{1\leq k, i\leq K} a_i(\x) - a_k(\x) + \log K\\
&\leq 2R + \log K
\end{align}
Hence, the condition on losses holds for $l_1$ being the cross entropy loss and $l_2(\cdot,\cdot) = (2R + \log K)\norm{\cdot - \cdot}_1$.

\IMDProperties*
\begin{proof}
	\par For nonnegativity, for any two finite measures $\Qcal_1$ and $\Qcal_2$, and since $\Fbb$ contains the null function, we have
	$$\IMD_{\Fbb_0}(\Qcal_1,\Qcal_2) = \sup_{f \in \Fbb}\int f \dd\Qcal_1 - \int f \dd\Qcal_2 \geq \int 0 \dd\Qcal_1 - \int 0 \dd\Qcal_2 = 0.$$
	\par For the triangle inequality, for any three measures $\Qcal_1, \Qcal_2, \Qcal_3$, we have:
	\begin{align}
		\IMD_{\Fbb}(\Qcal_1,\Qcal_3) &= \sup_{f \in \Fbb}\int f\dd\Qcal_1 - \int f\dd\Qcal_3\nonumber\\
		&= \sup_{f \in \Fbb}\int f\dd\Qcal_1 - \int f\dd\Qcal_2 + \int f\dd\Qcal_2 - \int f\dd\Qcal_3\nonumber\\
		&\leq \sup_{f \in \Fbb}\int f\dd\Qcal_1 - \int f\dd\Qcal_2 + \sup_{f \in \Fbb}\int f\dd\Qcal_2 - \int f\dd\Qcal_3 \quad\text{(by sub-additivity of the supremum)},\nonumber
	\end{align}
	hence the IMD satisfies the triangle inequality.\\
	\par For the characterization of a null IMD, let $\Qcal_1,\Qcal_2$  be two measures. First, assume that $\Qcal_2 \geq \Qcal_1$. Then for any $f \in \Fbb$, since $\Fbb$ is nonnegative, we have $\int f \dd\Qcal_1 \leq \int f\dd\Qcal_2$. Hence $\IMD_\Fbb(\Qcal_1,\Qcal_2) \leq 0$. Due to the nonnegativity of the IMD, we conclude that $\IMD(\Qcal_1,\Qcal_2)=0$.
	Conversely, assume that $\IMD_\Fbb(\Qcal_1,\Qcal_2) = 0$, implying that $\forall f \in \Fbb,\quad \int f\dd(\Qcal_1 - \Qcal_2) \leq 0$. We will examine two richness assumptions on
	 $\Fbb$: containing the continuous functions over $\Xcal$ and containing indicators functions of measurable sets.
	 \begin{description}
	 	\item[Case when $\Fbb$ contains indicator functions] In this case, the choice $f = [\cdot \in A]$ where $A$ is a measurable set implies that $\Qcal_2(A) \geq \Qcal_1(A)$ for any set $A$ and allows to conclude.
	 	\item[Case when $\Fbb$ contains the continuous functions] 
	 	We will prove that for any compact set $K \subseteq \Xcal$, we have $\Qcal_1(K) \leq \Qcal_2(K)$, following the same proof of \citet[Chapter 21, Section 4, Proposition 11]{royden1988real}. Let $K \subseteq \Xcal$ be a compact set and let $\epsilon>0$. The measures, $\Qcal_1$ and $\Qcal_2$ are Radon measures in our case, hence by outer regularity of $\Qcal_2$, there is a neighborhood of $O$ of $K$ such that 
	 	\begin{equation}\label{eq:OminusKEpsilon}
	 	\Qcal_2(O\setminus K) \leq \epsilon.
	 	\end{equation}
	 	By the compact extension property \citep[Chapter 21, Section 2, Theorem 7]{royden1988real}, there exists a continuous function $0\leq f\leq 1$ over $\Xcal$ with compact support, such that
	 	\begin{align}
	 		f(\x) &= 1 \quad \forall \x \in K \label{eq:fUnitK}\\
	 		f(\x) &= 0 \quad \forall \x \in \Xcal\setminus O. \label{eq:fZeroOminusK}
	 	\end{align}
 		We have
	 	\begin{align}
	 		\Qcal_1(K) - \Qcal_2(K) &= \int_K f\dd\Qcal_1 - \int_K f\dd\Qcal_2\quad(\text{by property }\eqref{eq:fUnitK})\nonumber\\
	 		&= \int f\dd\Qcal_1 - \int f\dd\Qcal_2 + \int_{\Xcal\setminus K} f \dd\Qcal_2 - \int_{\Xcal\setminus K} f \dd\Qcal_1 \nonumber\\
	 		&\leq \int_{\Xcal\setminus K} f \dd\Qcal_2 - \int_{\Xcal\setminus K} f \dd\Qcal_1\quad(\text{since } f \in \Fbb\text{ and } \int f\dd(\Qcal_1-\Qcal_2) \leq 0) \nonumber\\
	 		&= \int_{O\setminus K} f \dd\Qcal_2 - \int_{O\setminus K} f \dd\Qcal_1\quad(\text{by property }\eqref{eq:fZeroOminusK})\nonumber\\
	 		&\leq \epsilon\quad(\text{by property \eqref{eq:OminusKEpsilon} and since $0\leq f\leq 1$})
	 	\end{align}
 		Hence, we have $\Qcal_1(K) - \Qcal_2(K) \leq \epsilon$ for an arbitrary $\epsilon>0$, implying that $\Qcal_1(K)\leq \Qcal_2(K)$ and allowing to conclude.
	 \end{description}
	 \par Concerning the densities of $\Qcal_2$ and $\Qcal_1$, since $\Qcal_2 - \Qcal_1$ is absolutely continuous, it has a Radon-Nikodym derivative that we denote $q$. We have $\Qcal_2 = \Qcal_2 - \Qcal_1 + \Qcal_1$. By the linearity of the Radon-Nikodym derivative, we have $q_2 = q + q_1$, implying that $q_2 \geq q_1$.
\end{proof}

\propBoundLocalizedIMD*
\begin{proof}
    We first use a generalization of \citet[Theorem]{zhang2020localized}. Let $h_1 \in \Hbb^{r_1}, h_2 \in \Hbb^{r_2}$. Then
    \begin{align*}
        \risk{\Tcal}{h_1} &\leq \risk{\Tcal}{h_1,h_2} + \risk{\Tcal}{h_2}\\
        &= \risk{\Tcal}{h_1,h_2} - \risk{\Scal}{h_2} + \risk{\Scal}{h_2} + \risk{\Tcal}{h_2}\\
        &\leq \risk{\Scal}{h_1}+ \risk{\Tcal}{h_1,h_2} - \risk{\Scal}{h_1,h_2} + \risk{\Scal}{h_2} + \risk{\Tcal}{h_2}\\
        &\leq \risk{\Scal}{h_1}+ \sup_{\substack{h_1 \in \Hbb^{r_1}\\h_2 \in \Hbb^{r_2}}}\left(\risk{\Tcal}{h_1,h_2} - \risk{\Scal}{h_1,h_2}\right) + \inf_{h_2 \in \Hbb^{r_2}}\left(\risk{\Scal}{h_2} + \risk{\Tcal}{h_2}\right)\\
        &= \risk{\Scal}{h_1}+ \IMD_{l(\Hbb^{r_1},\Hbb^{r_2})} + \inf_{h_2 \in \Hbb^{r_2}}\left(\risk{\Scal}{h_2} + \risk{\Tcal}{h_2}\right),
    \end{align*}
    where $$l(\Hbb^{r_1},\Hbb^{r_2}) \coloneqq \{\x\mapsto(h_1(\x), l(h_2(\x))); h_1 \in \Hbb^{r_1}, h_2 \in \Hbb^{r_2}\}$$ 
    \par Now we prove that $l(\Hbb^{r_1}, \Hbb^{r_2}) \subseteq \Fbb_{r_1 + r_2}$ 
    Given $h_i \in \Hbb_{r_i}$ for $i \in \{1,2\}$, we have
    \begin{align*}
        \risk{\Scal}{h_1,h_2} &\leq \risk{\Scal}{h_1} + \risk{\Scal}{h_2}\quad\text{(triangle inequality)}\\
        &\leq r_1 + r_2\quad(\text{definition of } \Hbb^{r_1}, \Hbb^{r_2}).
    \end{align*}
    meaning that $l(h_1,h_2) \in \Fbb_{r_1 + r_2}$. Since the last inequality holds for any  $l(h_1,h_2) \in  l(\Hbb^{r_1},\Hbb^{r_2})$, we conclude that $l(\Hbb^{r_1}, \Hbb^{r_2}) \subseteq \Fbb_{r_1 + r_2}$.
    Since the supremumm taken over $\Fbb_{r_1 + r_2}$ is at least equal to the supremum over its subset $l(\Hbb^{r_1}, \Hbb^{r_2})$, we have
    \begin{equation}
        \risk{\Tcal}{h_1} \leq \risk{\Scal}{h_1} + \IMD_{\Fbb_{r_1 + r_2}} + \inf_{h \in \Hbb^{r_2}}\risk{\Tcal}{h} + \risk{\Scal}{h} \quad \forall h_1 \in \Hbb^{r_1}.
    \end{equation}
    At this stage, we have bounds on the target domain that involve a source risk. Hence, applying our result from  Proposition \ref{prop:sourceGuidedUncertaintyBound}, allows to  replace the source risk by the source-guided uncertainty and to conclude.
\end{proof}

\propIMDDuality*
\begin{proof}
We have
$$\IMD_{F_\epsilon}(\Tcalx,\Scalx) = \sup_{f \in \Fbb}\inf_{\alpha \geq 0} \esp{\Tcalx}{f} - \esp{\Scalx}{f} + \alpha(\epsilon - \esp{\Scalx}{f}).$$
Applying the inf-sup inequality gives the first part of the result.
\par The result of equality for $\epsilon>0$  is an application of \citet[Chapter 8, Section 6, Theorem 1]{luenberger1997optimization}. We consider the optimization problem:
\begin{equation}
    \begin{aligned}
        \inf_{f \in \Fbb}&\quad \esp{\Scalx}{f} - \esp{\Tcalx}{f}\\
        \text{s.t.}&\quad \esp{\Scalx}{f} \leq \epsilon
    \end{aligned}
\end{equation}
which is equivalent to our problem up to a minus sign. We have
\begin{itemize}
    \item $\Fbb$ is a convex subset the vector space of real valued functions taking an argument in $\Xcal$.
    \item The value of the problem at the solution is finite since its unconstrained optimal value is finite by assumption (equal to $\IMD_\Fbb(\Tcalx,\Scalx)$).
    \item The null function $f_0 = 0 \in \Fbb$, and we have $\esp{\Scalx}{f_0} = 0 < \epsilon$.
    \item The objective function and constraints are all linear.
\end{itemize}
Hence, applying \citet[Chapter 8, section 6, Theorem 1]{luenberger1997optimization}, the value of the constrained problem at the optimum is equal to  $\max_{\alpha\geq 0}\inf_{f \in \Fbb}\esp{\Scalx}{f} - \esp{\Tcalx}{f} + \alpha(\esp{\Scalx}{f} - \epsilon)$, \ie, the maximum is achieved for some $\alpha^*\geq 0$.
\end{proof}

\corFLocalizedIsDivergence*
\begin{proof}
	Let $\epsilon>0$, and assume $\IMD_{\Fbb_\epsilon}$ is null. By Proposition \ref{prop:IMDDuality}, there exists $\alpha^*\geq 0$ such that for any $f \in \Fbb$, we have $\esp{\Tcalx}{f} \leq (1+\alpha^*)\esp{\Scalx}{f} - \epsilon \alpha^*.$ That holds in particular for $f$ chosen as the null function, which would result a contradiction unless $\alpha^*=0$. In that case, $\esp{\Tcalx}{f} \leq \esp{\Scalx}{f} \quad\forall f \in \Fbb$. This means that $\IMD(\Tcalx,\Scalx) =0$ and by Proposition \ref{prop:IMDProperties}, it implies that $\Scalx \geq \Tcalx$. Since $\Scalx$ and $\Tcalx$ are probability distributions, this implies that $\Tcalx = \Scalx$.
\end{proof}
\propBoundLocalizedIMDBetaPerClass*
\begin{proof}
	The proof follows the same reasoning we had through Propositions \ref{prop:BoundLocalizedIMD}, \ref{prop:IMDDuality} and Corollary \ref{cor:BoundLocalizedIMDBeta}, through the following steps:
	\begin{enumerate}
		\item Adapting the proof of Proposition \ref{prop:BoundLocalizedIMD} to $\Fbb_{\rbf_1}$ and $\Fbb_{\rbf_2}$ instead of $\Fbb_{r_1}$ and $\Fbb_{r_2}$.
		\item Adapting the proof of Proposition \ref{prop:IMDDuality}:
		$$\IMD_{F_\epsbf}(\Tcalx,\Scalx) = \sup_{f \in \Fbb}\inf_{\alphabf \in \Rset_+} \esp{\Tcalx}{f} - \esp{\Scalx}{f} + \sum_{k=1}^K\alpha_k(\epsilon_k - \esp{\Scal_{\Xcal|k}}{f})$$.
		Then the inf-sup upper bound follows similarly. Again, we can apply the same strong duality argument if $\epsilon_k>0$ for any $1\leq k\leq K$.
		\item The last step is to use the inf-sup upper bound to bound the IMD for an arbitrary $\betabf \in \Rset_+^K$.
	\end{enumerate}
\end{proof}

\corBoundLocalizedIMDBetaPerClassSplitting*
\begin{proof}
	Applying Proposition \ref{prop:BoundLocalizedIMDBetaPerClass} with the specified choice of $\rbf_1$ and $\rbf_2$, and with $\betabf$ verifying $\betabf^T\ones \leq \beta$, we get
	\begin{equation}
		\IMD_\Fbb\left(\Tcalx, \Scalx + \sum_{k=1}^K \beta_k\Scal_{\Xcal|k}\right) +\betabf^T(\rbf_1 + \rbf_2) \leq \IMD_\Fbb\left(\Tcalx, \Scalx + \sum_{k=1}^K \beta_k\Scal_{\Xcal|k}\right) + \beta (r_1 + r_2).
	\end{equation}
	Since this inequality holds for any choice of $\betabf^T\ones \leq \beta$, we can take the minimum which results in the DA bound (the minimum is achieved as the set $\{\betabf \in \Rset_+^K; \betabf^T\ones\leq \beta\}$ is compact).\\ As for the equality constraint, it is sufficient to notice that increasing any of the $\beta_k$ cannot increase the value  of $\IMD_\Fbb\left(\Tcalx, \Scalx + \sum_{k=1}^K \beta_k\Scal_{\Xcal|k}\right)$.
\end{proof}
\paragraph{Proof of the reweighting interpretation}

Let $\beta\geq 0$ and $\p$ be the vector of source class proportions. For $\betabf \geq 0$  verifying $\betabf^T\ones = \beta$, we have $$\sum_{k=1}^K \beta_k\Scal_{\Xcal|k} = \sum_{k=1}^K (p_k + \beta_k)\Scal_{\Xcal|k} = (1+\beta) \sum_k \tilde p_k \Scal_{\Xcal|k},$$
where $\tilde\p = \frac{\p + \beta}{1+\beta}$. Conversely, for any $\tilde \p \in \Delta_K$ such that $(1+\beta)\tilde \p\geq \p$, letting $\betabf = (1+\beta)\tilde\p - \p$, we have $\betabf^T\ones = \beta$. That results in the following set equality
$$\left\{\frac{\betabf + \p}{1+\beta}; \betabf\geq 0; \betabf^T\ones= \beta\right\} = \{\tilde \p \in \Delta_K; (1+\beta)\tilde\p \geq \p\},$$ which allows to conclude.

\propTargetShift*
\begin{proof}
Let $\Dcalxy = \Tcalxy = \Scalx$. Then
\begin{align}
    \IMD_\Fbb(\Tcalx, (1+\beta)\Scalx) &= \sup_{f \in \Fbb}\esp{\Tcalx}{f} - \esp{\Scalx}{f}\\
    &= \sup_{f \in \Fbb} \sum_{k=1}^K(q_k - (1+\beta)p_k)\esp{\Dcal_{\Xcal|k}}{f}.
\end{align}
Choosing $\beta \geq \max_k\left(\frac{q_k}{p_k}-1\right)_+$ will guarantee that the quantity $(q_k - (1+\beta)p_k)$ is nonpositive for any $1\leq k\leq K$. In this case, $f=0$ achieves the supremum and the IMD is null.
\par Likewise, we have 
\begin{align}
    \IMD_\Fbb\left(\Tcalx, \sum_{k=1}^K (p_k+\beta_k)\Scal_{\Xcal|k}\right) &= \sup_{f \in \Fbb}\esp{\Tcalx}{f} - \sum_{k=1}^K(p_k + \beta_k)\esp{\Scal_{\Xcal|k}}{f}\\
    &= \sup_{f \in \Fbb} (q_k - p_k - \beta_k)\esp{\Dcal_{\Xcal|k}}{f}
\end{align}
Choosing $\beta_k \geq (q_k - p_k)_+$ will guarantee that the quantity $(q_k - p_k - \beta_k)$ is nonpositive for any $1\leq k\leq K$. The rest follows as in the first case.
\par For the converse, assuming that $\Fbb$ is rich enough, then by Proposition \ref{prop:IMDProperties}, having $\IPM(\Tcalx, (1+\beta)\Scalx) = 0$ implies that $\sum_k (q_k - p_k - \beta p_k)\Dcal_{\Xcal|k} \leq 0$. 
By the assumption made on the family of sets $\{B_l\}_{l=1}^K$, we have for any $1\leq l\leq K$,
$$\sum_k (q_k - p_k - \beta p_k)\Dcal_{\Xcal|k}(B_l) = (q_l - p_l - \beta p_l)\Dcal_{\Xcal|l}(B_l) \leq 0.$$
Dividing by $\Dcal_{\Xcal|l}(B_l)$ implies that $\beta p_l \geq q_l - p_l$. Due to $\beta$'s nonnegativity, this means that $\beta \geq (\frac{q_l}{p_l} - 1)_+$.
\par For the per-class case, the same reasoning holds via considering $(q_k - p_k - \beta_k)\Dcal_{\Xcal|k}$ instead of $(q_k - p_k - \beta p_k)\Dcal_{\Xcal|k}$.
\end{proof}

\propDABoundTotalVariation*
\begin{proof}
	The result is an application of Proposition \ref{prop:BoundLocalizedIMD} for the two extreme cases $r_1 = r_2 = 1$ (in which case, the localization does not result in any restriction since the loss function $l$ is assumed to be bounded by 1 by assumption) and $r_1 = r_2 = 0$. We will then bound $\urisk{\Hbb}{g}$ and $\IPM_{\Fbb_{r_1 + r_2}}(\Tcalx,\Scalx)$ for the two extreme cases. We will denote by $\mu$ a measure dominating both $\Scalx$ and $\Tcalx$ (\eg, the Lebesgues measure).
	
	\paragraph{Bounding $\urisk{\Hbb}{h}$ and the ideal joint risk:} First, we have $\urisk{\Hbb}{h} = \inf_{h \in \Hbb} \int t(\x)l(g(\x),h(\x))) + l(h(\x), f_S(\x)) \dd\mu(\x)$. Given that $\Hbb$ has infinite capacity, we have
	$$\urisk{\Hbb}{h} = \int \inf_{h \in \Hbb}t(\x)l(g(\x),h(\x))) + l(h(\x), f_S(\x)) \dd\mu(\x)$$
	 so that we can reason on the integrand $\inf_{h \in \Hbb}t(\x) l(g(\x),h(\x)) + s(\x) l(h(\x), f_S(\x))$. For the sake of readability, we omit the dependence on $\x$ the point of interest.\\ 
	 At point $\x$, we first bound the integrand using the triangle inequality as 
	\begin{equation}\label{eq:infCapacityIntegrandBound}
		\inf_{h \in \Hbb}t\ l(g,h) + s\ l(h, f_S) \leq t\ l(g,h_g) + \inf_{h \in \Hbb}t\ l(h_g, h) + s\ l(h, f_S).
	\end{equation}
	If $h_g = f_S$, then setting $h = f_S = h_g$ achieves a minimum equal to 0. Else, depending on whether $t\geq s$, we can set $h = f_S$ or $h = h_g$,
	 leading to the equality
	 \begin{equation}\label{eq:minSTForm}
	 	\inf_{h \in \Hbb}t\ l(h_g, h) + s\ l(h, f_S) = \min(s,t)l(h_g,f_S).
	 \end{equation}
 	Combining \eqref{eq:infCapacityIntegrandBound} and \eqref{eq:minSTForm}, we have
 	\begin{equation}\label{eq:boundUncertaintyInfCapacity}
 		\urisk{\Hbb}{h} \leq \risk{\Tcal}{g,h_g} + \int \min(s,t)l(h_g,f_S)\dd\mu.
 	\end{equation}
 	For the other extreme case corresponding to $\Hbb^0$, by definition we have $\risk{\Scal}{h} = 0$ for $h \in \Hbb^0$, so that
 	\begin{equation}\label{eq:boundUncertaintyInfCapacityZero}
 		\urisk{\Hbb^0}{h} = \inf_{h \in \Hbb^0}\risk{\Tcal}{g,h} \leq \risk{\Tcal}{g,f_S}.
 	\end{equation}
 	Since $\urisk{\Hbb}{f_T}$ is the ideal joint risk, by a similar argument, we have
 	\begin{equation}\label{eq:BoundIdealJointRiskInfCapacity}
 		\inf_{h \in \Hbb}\risk{\Tcal}{h} + \risk{\Scal}{h} = \int \min(s,t)l(f_T,f_S)\dd\mu
 	\end{equation}
 	and
 	\begin{equation}\label{eq:BoundIdealJointRiskInfCapacityZero}
 		\inf_{h \in \Hbb^0}\risk{\Tcal}{h} + \risk{\Scal}{h} \leq \risk{\Tcal}{f_S}.
 	\end{equation}
 	\paragraph{Bounding $\IPM_\Fbb(\Tcalx,\Scalx)$:} We have
 	\begin{align}
 		\IMD_{\Fbb}(\Tcalx,\Scalx) &= \sup_{f \in \Fbb}\int f(t-s)\dd\mu\nonumber\\ 
 		&= \int (t-s)_+\dd\mu \quad(\text{choosing the indicator function }[t(\cdot)>s(\cdot)] \text{to achieve the supremum})\nonumber\\
 		&= \frac{1}{2} \int \abs{t-s} + (t-s)\dd\mu\quad(\text{since } u_+ = \frac{1}{2}(u + \abs{u})\text{ for }u \in \Rset)\nonumber\\ 
 		&= \frac{1}{2}\int\abs{t-s}\dd\mu = \frac{1}{2}d_1(\Tcalx,\Scalx)\quad(\text{since }\int s\dd\mu = \int t \dd\mu = 1) \label{eq:IMDInfCapacity}.
 	\end{align}
 	For the case of $\Fbb_0$, we have
 	\begin{align}
 		\IMD_{\Fbb_0}(\Tcalx,\Scalx) &= \sup_{f \in \Fbb_0}\int f\dd\Tcalx \quad(\text{by definition of }\Fbb_0 \text{ and the } \IMD)\nonumber\\
 		&= \sup_{f \in \Fbb_0} \int_{\supp\Scalx^c} f\dd\Tcalx + \int_{\supp\Scalx} f\dd\Tcalx\nonumber\\
 		&= \sup_{f \in \Fbb_0}\int_{\supp\Scalx^c} f\dd\Tcalx + \int_{\supp\Scalx} f\frac{t}{s}\dd\Scalx\quad(\text{since }s(\x)>0 \ \forall \x \in \supp\Scalx)\nonumber\\
 		&= \sup_{f \in \Fbb_0}\int_{\supp\Scalx^c} f\dd\Tcalx \quad(\text{since for any }f \in \Fbb_0,\ f\text{ is null }\Scalx-\text{almost everywhere})\nonumber\\
 		&= \Tcalx(\supp\Scalx^c) (\text{choosing the indicator function }[. \in \supp\Scalx^c] \text{to achieve the supremum}).
 		\label{eq:IMDInfCapacityZero}
 	\end{align}
	 \par Finally, combining \eqref{eq:boundUncertaintyInfCapacity}, \eqref{eq:BoundIdealJointRiskInfCapacity} and \eqref{eq:IMDInfCapacity} gives the first result. Likewise, combining \eqref{eq:boundUncertaintyInfCapacityZero}, \eqref{eq:BoundIdealJointRiskInfCapacityZero} and \eqref{eq:IMDInfCapacityZero} gives the second result.
\end{proof}

\corHdHIMD*
\begin{proof}
	In the first equality, for $\beta\geq 0$ we have
	\begin{align}
		1 - \IMD_{\HdH}(\Tcalx,(1+\beta)\Scalx) &= 1 - \sup_{f \in \HdH}\proba{\Tcalx}{f = 1} - (1+\beta)\proba{\Scalx}{f = 1}\nonumber\\
		&= \inf_{f \in \HdH}\proba{\Tcalx}{f = 0} + (1+\beta)\proba{\Scalx}{f = 1}.
	\end{align}
	Similarly, in the second one, for any $\betabf \in \Rset_+^K$ we have
	\begin{align}
		1 - \IMD_{\HdH}\left(\Tcalx,\Scalx + \sum_k\beta_k\Scal_{\Xcal|k}\right) &= 1 - \sup_{f \in \HdH}\proba{\Tcalx}{f = 1} - \proba{\Scalx}{f = 1} + \sum_k \beta_k\proba{\Scal_{\Xcal|k}}{f = 1}\nonumber\\
		&= \inf_{f \in \HdH}\proba{\Tcalx}{f = 0} + \sum_{k=1}^K\proba{\Scal_{\Xcal_k}}{f = 1}.
	\end{align}
	The result follows by taking the maximum over $\betabf$ such that $\sum_{k}\beta_k \leq \beta$.
\end{proof}

\propHSupport*
\begin{proof}
We have
\begin{align}
	\IMD_{\HdH_0}(\Tcalx,\Scalx) &= \sup_{f \in \HdH_0}\proba{\Tcalx}{f=1}\quad(\text{since } f \in \HdH_0 \Rightarrow \proba{\Scalx}{f=1} = 0)\nonumber\\
	&= \sup_{\substack{h_1,h_2 \in \Hbb\\\proba{\Scalx}{h_1\neq h_2} = 0}}\proba{\Tcalx}{h_1\neq h_2}\quad(\text{by definition of }\HdH)\nonumber\\
	&= 1 - \inf_{\substack{h_1,h_2 \in \Hbb\\\proba{\Scalx}{h_1= h_2} = 1}}\proba{\Tcalx}{h_1=h_2}\nonumber\\
	&\leq 1 - \proba{\Tcalx}{\bigcap_{{\substack{h_1,h_2 \in \Hbb\\\proba{\Scalx}{h_1= h_2} = 1}}}\{h_1 = h_2\}} \text{(the intersection of closed sets is closed hence measurable)}\nonumber\\
\end{align}
\end{proof}

\propOptimalTransportLocalized*
\begin{proof}
	Given that $l$ verifies the triangle inequality, for any $\{y_i\}_{i=1}^4 \subseteq \Ycal$ we have 
	\begin{equation}\label{eq:triangleIneqFourArgs}
		l(y_1,y_2) - l(y_3,y_4) \leq l(y_1,y_3) + l(y_3,y_4) + l(y_4,y_2) - l(y_3,y_4) = l(y_1,y_3) + l(y_4,y_2).
	\end{equation}
	Hence, given $\u,\v \in \Xcal$ and $h_1, h_2 \in \Hbb$, by property \eqref{eq:triangleIneqFourArgs}, we have
	\begin{align}
		&l(h_1(\u),h_2(\u)) - l(h_1(\v),h_2(\v))\leq l(h_1(\u),h_1(\v)) + l(h_2(\u),h_2(\v))\\
		&l(h_1(\v),h_2(\v)) - l(h_1(\u),h_2(\u))\leq l(h_1(\v),h_1(\u)) + l(h_2(\v),h_2(\u))\\
		\Rightarrow& \abs{l(h_1(\u),h_2(\u)) - l(h_1(\v),h_2(\v))}\leq l(h_1(\u),h_1(\v)) + l(h_2(\u),h_2(\v))\quad(\text{by symmetry of } l)\\
		&\leq \frac{1}{2}d(\u,\v) + \frac{1}{2}d(\u,\v). 
	\end{align}
	Thus the space $l(\Hbb,\Hbb) \subseteq \Fbb$, where $\Fbb$ is the space of nonnegative $1-$Lipschitz functions.

	Given a fixed $\betabf \in \Rset_+^K$, computing $\IMD_\Fbb\left(\Tcalx, \sum_{k=1}^K(p_k + \beta_k)\Scal_{\Xcal|k}\right)$ is equivalent to solving the following optimization problem.
	\begin{align}
		\sup_{f\in \Fbb_{\text{Lip}}}&\quad \int f(\u)\dd\Tcalx(\u) - (p_k+\beta_k)\int f(\v_k)\dd\Scal_{\Xcal|k}(\v)\\
		\text{s. t.}&\quad f(\u) - f(\v_k) \leq d(\u,\v) \forall \u, \v_k \in \Xcal; 1\leq l\leq K
	\end{align}
	The problem can then be re-written:
	\begin{align}
		\sup_{f\geq 0}&\quad \int f(\u)\dd\Tcalx(\u) - (p_k+\beta_k)\int f(\v_k)\dd\Scal_{\Xcal|k}(\v)\\
		\text{s. t.}&\quad \sup_{\u,\v_k \in\Xcal}f(\u) - f(\v_k) \leq d(\u,\v)\\
		&\quad \inf_{\u \in \Xcal} f(u)\geq 0, \inf_{\v_k \in \Xcal}f(\v_k)\geq 0 \quad\forall 1\leq \v_k \leq K
	\end{align}
	Now we apply \citet[Section 8.6, Theorem 1]{luenberger1997optimization}:
	\begin{itemize}
		\item The space of Lipschitz functions is a convex subset of the space of continuous valued functions of $\Xcal$.
		\item Any function that outputs a positive constant strictly verfies the inequality constraints.
		\item By compactness of $\Xcal$, the value of the supremum is finite.
	\end{itemize}
	Hence, strong Lagrangian duality holds: denoting $\mathfrak M+(A)$ the space of nonnegative finite measures on a set A, the value of the objective at the optimum is 
	\begin{align}
		&\inf_{\substack{\Pcal_k \in \mathfrak M_+(\Xcal^2)\\\Ucal, \Vcal_k \in \mathfrak M_+(\Xcal)}}\sup_{f \in \Fbb} \int f(\u)\dd\Tcalx(\u) - \int (p_k+\beta_k)f(\v_k)\dd\Scal_{\Xcal|k}(\v_k) \nonumber\\
		+& \int d(\u,\v_k) - f(\u) + f(\v_k) \dd\Pcal_k(\u,\v_k) + \int f(\v_k)\dd\Vcal_k + \int f(\u)\dd\Ucal\nonumber\\
		=& \inf_{\Pcal_k, \Vcal_k, \Ucal} \sum_{k}d(\u,\v_k)\dd\Pcal(\u,\v_k)\nonumber\\ 
		+& \sup_{f \in \Fbb} \int f(\u)\dd(\Tcalx(\u) - \sum_k \Pcal(\u,\v_k) + \Ucal(\u)) + \sum_k \int f(\v_k) \dd (\Pcal(\u,\v_k) - (p_k + \beta_k)\Scal_{\Xcal|k} + \Vcal_k).\nonumber
	\end{align}
	Following the same argument in \citet{santambrogio2015optimal}, the supremum in the latter formulation is finite if and only if we have
	\begin{align}
		\pi_1\#\sum_k\Pcal_k &= \Tcalx + \Ucal\\
		\pi_2\#\Pcal_k &= (p_k + \beta_k)\Scal_{\Xcal|k} - \Vcal_k.
	\end{align}
	The nonnegativity of $\Vcal_k$ and $\Ucal$ implies the formulation in the statement of the proposition.
	\par To transform the inequality constraint to an equality, notice that for any $1-$Lipschitz function $f$ that is non-negative on $\supp\Scalx$, setting $f$ to $f_+$ on $\supp \Tcalx \setminus \supp \Scalx$ cannot decrease the value of the objective, while keeping it in the feasible set of solutions (\ie, it stays nonnegative and $1-$Lipschitz). Hence, it suffices to impose the non-negativity constraint only on $\supp \Scalx$. This eliminates the variable $\Ucal$ and the inequality constraint on $\Tcalx$ becomes an equality.
\end{proof}
~\paragraph{Proof for support inclusion for optimal transport}
Let $\Fbb$ be the space of nonnegative $1-$Lipschitz functions over $\Xcal$.  For any $f \in \Fbb_0$, $f$ is nonnegative  and continuous with  $\int f\dd\Scalx = 0$, so $f$ is null on $\supp \Scalx$.  Hence, for any $\u \in \Xcal, \v \in \supp\Scalx$, we have
$f(\u) \leq f(\v)  + d(\u,\v) = d(\u,\v)$, implying that $f(\u) \leq d(\u, \supp\Scalx) \coloneqq \inf_{\v \in \supp\Scalx}d(\u,\v)$.
As a result, we have
$$\IMD_{\Fbb_0}(\u,\v) = \sup_{f \in \Fbb_0} \int_{\Tcalx}f(\u)\dd\Tcalx(\u) \leq \sup_{f \in \Fbb_0} \int d(\u,\Scalx)\dd\Tcalx(\u).$$
Moreover, the function $d(.,\Scalx)$ is 1-Lipschitz, nonnegative and is null on $\Scalx$, meaning that $d(.,\Scalx) \in \Fbb_0$. As a result, it achieves the supremum defining $\IMD_{\Fbb_0}$, hence the statement.

\end{document}